%%%%%%%% ICML 2024 EXAMPLE LATEX SUBMISSION FILE %%%%%%%%%%%%%%%%%

\documentclass{article}

% Recommended, but optional, packages for figures and better typesetting:
\usepackage{microtype}
\usepackage{graphicx}
\usepackage{booktabs} % for professional tables

% hyperref makes hyperlinks in the resulting PDF.
% If your build breaks (sometimes temporarily if a hyperlink spans a page)
% please comment out the following usepackage line and replace
% \usepackage{icml2024} with \usepackage[nohyperref]{icml2024} above.
\usepackage{hyperref}

% Attempt to make hyperref and algorithmic work together better:

% Use the following line for the initial blind version submitted for review:
\usepackage[accepted]{icml2024}

% If accepted, instead use the following line for the camera-ready submission:
% \usepackage[accepted]{icml2024}

% For theorems and such
\usepackage{amsmath}
\usepackage{amssymb}
\usepackage{mathtools}
\usepackage{amsthm}

\usepackage{soul}           % highlight
\usepackage{mathrsfs}       % curly F
\usepackage{dsfont}
\usepackage{amsthm}
\usepackage{amsmath}
\usepackage{amsfonts}
\usepackage{amssymb}
\usepackage{subcaption}
\usepackage{bm,array}

\usepackage{multirow}

% Defining new commands

\newcommand{\N}{\mathbb{N}}
\newcommand{\R}{\mathbb{R}}
\renewcommand{\P}{\mathbb{P}}

\newcommand{\juju}[1]{}

\newcommand{\measSpace}{\Omega}

\newcommand{\probaSet}{\mathcal M_p(\measSpace)}
\newcommand{\AC}{\text{AC}}
\newcommand{\PRD}{\text{PRD}}
\newcommand{\dPRD}{\partial\PRD}
\newcommand{\supp}{\mathrm{supp}}
\newcommand{\cosupp}{\mathrm{cosupp}}
\newcommand{\fpr}{\mathrm{fpr}}
\newcommand{\fnr}{\mathrm{fnr}}
\newcommand{\F}{\mathscr{F}}
\newcommand{\E}{\mathbb{E}}
\newcommand{\1}{\mathds{1}}
\newcommand{\BKNN}{B_{kNN}}
\newcommand{\X}{{\mathcal X}}

\DeclareMathOperator*{\argmin}{arg\,min}

% if you use cleveref..
\usepackage[capitalize,noabbrev]{cleveref}

%%%%%%%%%%%%%%%%%%%%%%%%%%%%%%%%
% THEOREMS
%%%%%%%%%%%%%%%%%%%%%%%%%%%%%%%%
\theoremstyle{plain}
\newtheorem{theorem}{Theorem}[section]
\newtheorem{proposition}[theorem]{Proposition}

\theoremstyle{definition}
\newtheorem{definition}[theorem]{Definition}

\theoremstyle{remark}

% Todonotes is useful during development; simply uncomment the next line
%    and comment out the line below the next line to turn off comments
%\usepackage[disable,textsize=tiny]{todonotes}
\usepackage[textsize=tiny]{todonotes}

% The \icmltitle you define below is probably too long as a header.
% Therefore, a short form for the running title is supplied here:
\icmltitlerunning{Improving upon the improved Precision Recall metric for assessing generative models}

\begin{document}

\twocolumn[
\icmltitle{Unifying and extending Precision Recall metrics for assessing generative models}

% It is OKAY to include author information, even for blind
% submissions: the style file will automatically remove it for you
% unless you've provided the [accepted] option to the icml2024
% package.

% List of affiliations: The first argument should be a (short)
% identifier you will use later to specify author affiliations
% Academic affiliations should list Department, University, City, Region, Country
% Industry affiliations should list Company, City, Region, Country

% You can specify symbols, otherwise they are numbered in order.
% Ideally, you should not use this facility. Affiliations will be numbered
% in order of appearance and this is the preferred way.
\icmlsetsymbol{equal}{*}

\begin{icmlauthorlist}
\icmlauthor{Benjamin Sykes}{equal,yyy}
\icmlauthor{Loïc Simon}{equal,yyy}
\icmlauthor{Julien Rabin}{equal,yyy}

\end{icmlauthorlist}

\icmlaffiliation{yyy}{University of Caen Normandie, ENSICAEN, CNRS, France}

\icmlcorrespondingauthor{Benjamin Sykes}{benjamin.sykes@unicaen.fr}

% You may provide any keywords that you
% find helpful for describing your paper; these are used to populate
% the "keywords" metadata in the PDF but will not be shown in the document
\icmlkeywords{Machine Learning, ICML}

\vskip 0.3in
]

% this must go after the closing bracket ] following \twocolumn[ ...

% This command actually creates the footnote in the first column
% listing the affiliations and the copyright notice.
% The command takes one argument, which is text to display at the start of the footnote.
% The \icmlEqualContribution command is standard text for equal contribution.
% Remove it (just {}) if you do not need this facility.

%\printAffiliationsAndNotice{}  % leave blank if no need to mention equal contribution
% \printAffiliationsAndNotice{\icmlEqualContribution} % otherwise use the standard text.

\begin{abstract}
With the recent success of generative models in image and text, the evaluation of generative models has gained a lot of attention. %is still an active topic.
%While it may seem fairly easy to evaluate the output of a generative model with our trained human eye, finding a metric that intrinsically quantifies the performance of the model remains a difficult task.
Whereas most generative models are compared in terms of scalar values such as Fréchet Inception Distance (FID) or Inception Score (IS), in the last years \cite{sajjadi_AssessingGenerativeModels_2018} proposed a definition of 
precision-recall curve to characterize the closeness of two distributions.
%precision and recall for distributions (PRD) with an interpretation as a precision-recall curve, therefore transposing a concept already used and accepted in binary classification.
Since then, various approaches to precision and recall have seen the light \cite{kynkaanniemi_ImprovedPrecisionRecall_2019,naeem_ReliableFidelityDiversity_2020,park_ProbabilisticPrecisionRecall_2023}. 
They center their attention on the extreme values of precision and recall, but apart from this fact, their ties are elusive.
In this paper, we unify most of these approaches under the same umbrella, relying on the work of \cite{simon_RevisitingPrecisionRecall_2019}. 
Doing so, we were able not only to recover entire curves, but also to expose the sources of the accounted pitfalls of the concerned metrics. 
We also provide consistency results that go well beyond the ones presented in the corresponding literature. 
Last, we study the different behaviors of the curves obtained experimentally.
\iffalse
While those implementations of precision and recall tend to at the same time simplify the computations, the intuition of the algorithms and reduce the effects of outliers, they are first of all based on visual intuition of the problem and they secondly overlook the initial and mathematically grounded definition of precision and recall as a trade-off curve. \\
  To motivate the necessity of curves to deal with precision and recall, we first show with toy examples how precision and recall are crucial to compare data distributions. We then expose our setting in which, inspired from \cite{simon_RevisitingPrecisionRecall_2019}, a PRD curve is described through optimization of classifier and their false positive and negative rates. We go through the recent literature of most commonly used precision and recall metrics and extend them to complete precision and recall curves. We then propose our own instance of PR curve based on a k-nearest neighbor (kNN) binary classifier and its error rates. Finally, we compare those methods together and show how new information is gained with the parametric PRD curve.
\fi
\end{abstract}

\section{Introduction}
In this article, we consider metrics designed to evaluate the adequacy of a generative model to the distribution it is assumed to capture. 
In itself, this problem consists of evaluating the closeness of the real distribution, hereafter denoted by $P$ and the generated one, denoted by $Q$.
In an early period, several scalar metrics were designed such as Inception Score \cite{salimans_ImprovedTechniquesTraining_2016} and the iconic Fréchet Inception Distance \cite{heusel_GANsTrainedTwo_2017}. 
A rich literature completes this line of research, pointing at limitations and extensions or concurrent scalar metrics. 
Notwithstanding, one pitfall is shared by any such scalar metric in that they cannot account separately for two types of failures:  namely for the lack of realism (a.k.a fidelity), and the lack of variability (diversity). 
This assessment was first carried out by \cite{sajjadi_AssessingGenerativeModels_2018} where the authors developed a trade-off curve known as the Precision Recall Curve, which characterizes both types of flaws. 
Each point of the curve has two components $\alpha$ (a.k.a precision) and $\beta$ (a.k.a recall) which in essence represent respectively the mass of $P$ and $Q$ that can be extracted simultaneously by selecting a subset of their common support (the formal definition will be clarified later on). This intuitive description is illustrated in Fig.~\ref{fig:PR}.

\begin{figure}[!t]
    \centering
         \includegraphics[width=0.48\textwidth]{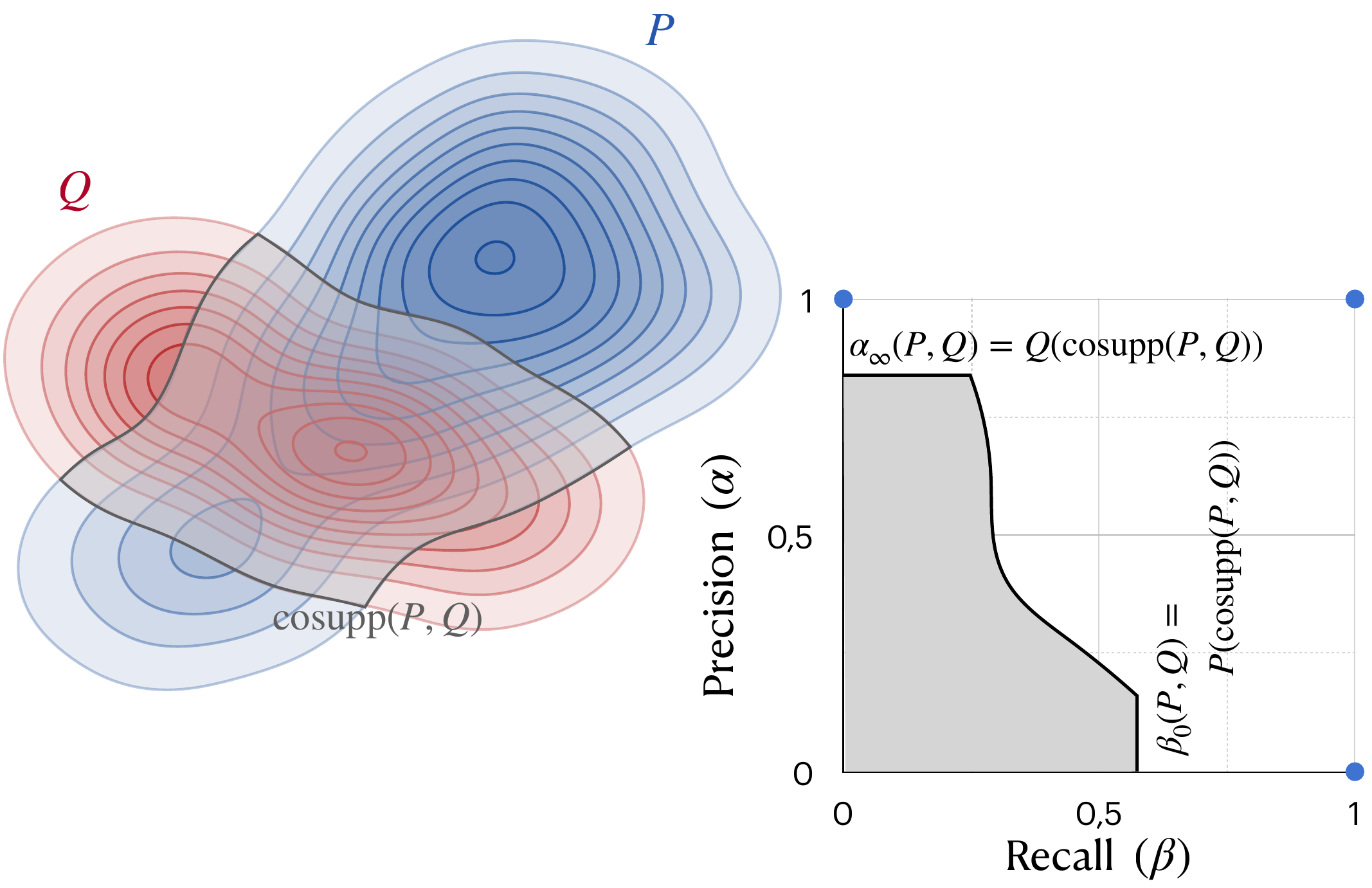}
    \caption{
    %\hl{TODO: }Ref to \cite{kynkaanniemi_ImprovedPrecisionRecall_2019} for the figure} 
    Right: the PR-curve is the frontier of the shaded area composed of all admissible PR pairs $(\beta,\alpha)$. In essence, these pairs represent the mass of $P$ and $Q$ that one can recover by selecting a subset of the common support (gray area on the left). More precisely, by selecting regions of high likelihood of $P$, one trades precision ($\alpha$) in favor of recall ($\beta$).  The extreme values $\beta_0(P,Q)$ and  $\alpha_\infty(P,Q)$ embody the respective masses of the entire common support.}
    \label{fig:PR}
\end{figure}

The authors of \cite{sajjadi_AssessingGenerativeModels_2018} give a formal definition of their curve for general distributions, but they were able to provide a practical characterization (amenable to empirical evaluation) only in the case of discrete distributions, and relied on clustering to get an algorithmic evaluation of the curve. 
Followup theoretical insights were provided in \cite{simon_RevisitingPrecisionRecall_2019} where an alternate characterization was exposed which extend to general distributions. 
More details will be given later on since it will be central in our developments.
This work was also complemented by a deep theoretical analysis of the link between this curve and many other statistical notions in \cite{siry_TheoreticalEquivalenceSeveral_2022}. In particular, the authors show that PR curves are in fact equivalent to Divergence Frontiers, which were developed in \cite{djolonga_PrecisionRecallCurvesUsing_2020} in an attempt to generalize PR curves.

In parallel to these theoretical works, a handful of practical metrics have been developed such as \cite{kynkaanniemi_ImprovedPrecisionRecall_2019, naeem_ReliableFidelityDiversity_2020, cheema_PrecisionRecallCover_2023, khayatkhoei_EmergentAsymmetryPrecision_2023, park_ProbabilisticPrecisionRecall_2023}.
All of them point to the shortcomings of earlier attempts and propose an alternative to improve the concerned aspects. 
A few remarks can be made already about most of these approaches. Starting from \cite{kynkaanniemi_ImprovedPrecisionRecall_2019}, it was argued that evaluating the extreme Precision and Recall was enough in practice, and therefore, instead of extracting a whole curve all of these variants only evaluates two scalar metrics: namely the extreme precision (denoted by $\alpha_\infty(P,Q)$) and the extreme recall ($\beta_0(P,Q)$).
This choice may appear justified, but in many cases, the theoretical values of both metrics reach their saturation level (i.e. $1$) even though the two distributions $P$ and $Q$ differ substantially. 
As a result, the metrics do not provide much insight into the closeness of $P$ and $Q$.

Besides, the relation between the empirical estimates and the associated theoretical metrics (i.e. at the population level) is not always clear. 
In particular, oftentimes, experimental behaviors praised by the authors are in fact contradictory with the expected behavior of the theoretical metrics. 
A typical example concerns experiments where $P$ and $Q$ are both Gaussian distributions but where $Q$ is shifted away from $P$. 
In that case, the theoretical metrics remain constant whatever the shift, while the empirical estimates appear to decrease with the magnitude of the shift. 
The authors embrace this desirable experimental observation without recognizing that it arises from the compensatory interaction of two underlying flaws: on the one hand, the two theoretical metrics are lacking, and on the other hand, their empirical estimators are not accurate.
Instead, we advocate for extracting entire PR curves since this gives a complete picture of the disparity between the two distributions, and for conducting a thorough analysis of the consistency of the empirical estimator w.r.t to the population level counterparts. Note that in the literature, consistency is at best studied in the case $P=Q$.

In fact, we make the following contributions. 
First, we show that most previous fidelity (resp. diversity) % variability
metrics can be interpreted as estimators of $\alpha_\infty(P,Q)$ (resp. $\beta_0(P,Q)$) thanks to the binary classification point of view developed in \cite{simon_RevisitingPrecisionRecall_2019}.
As a result, we show that they can be extended into complete curves in quite natural ways.
Besides, concurrent approaches differ merely by their underlying hypothesis class (that is the family of classifiers). 
For instance, the Improved Precision-Recall (IPR) of \cite{kynkaanniemi_ImprovedPrecisionRecall_2019} relates to kernel density estimators with adaptive bandwidth, while coverage \cite{naeem_ReliableFidelityDiversity_2020} (and its precision counterpart obtained by swapping the role of $P$ and $Q$ \cite{khayatkhoei_EmergentAsymmetryPrecision_2023}) relates to knn classification\footnote{On the contrary we will not pursue the same endeavor for \cite{park_ProbabilisticPrecisionRecall_2023} since their metric does not bear any reminiscence to classical non-parametric classification literature.}. 
Observed under this lens, any idiosyncrasy of the emerging hypothesis class (or its usage)  stands out and can be readily amended. 
In particular, one common pitfall transpires: namely the absence of data split between training samples (used to fit the classifier) and test samples (used to evaluate precision or recall). 
This introduces a negative bias in the estimators as well as correlations that make the analysis of the estimator consistency unnecessarily challenging.
On the contrary, by using split, which is standard practice, the bias of the estimators is trivially positive and consistency can be studied using standard techniques. 
We actually provide such a result for the curve associated with coverage (which goes beyond the $P=Q$ case considered originally in \cite{naeem_ReliableFidelityDiversity_2020}).
We conduct experiments on toy examples similar to those promoted in previous works to re-assess the pros and cons of the PR curve estimator variants (with and without fixes). Our conclusion in this regard is that our extension to coverage has better results than the other extensions. While iPR is sensitive to outliers and is less efficient than our other extensions with the original setting, we provide amendments that correct this behavior, namely setting the value of neighbors $k$ in kNN classifier to grow with the number of data points. With our extended metrics, we also provide some illustrations of the known cases where a generative model either creates, drops, or re-weights modes. Precision and recall curves allow to finely represent those three simultaneous behaviors at once.

\section{Recap on the relevant literature}
Let $\measSpace$ a measurable space, and denoting $\probaSet$ the set of distributions over $\measSpace$, let $P,Q\in\probaSet$ (e.g. real and generated distribution). 

\subsection{The gist on the original PR curve notion}
First the PR set between $P$ and $Q$ is defined as the set $\PRD(P,Q)$ of non negative couples $(\alpha, \beta)$ such that, 
\iffalse $\exists \mu\in\AC(P,Q)$ (meaning a distribution that is absolutely continuous w.r.t $P$ and $Q$) %the a.c condition is necessary for technical reasons corresponding to the extreme values of $\alpha and $\beta$
\else $\exists \mu\in\probaSet$ %without the technicality
\fi 
verifying both $P\geq \beta \mu$ and $Q\geq \alpha \mu$. In a nutshell, the two conditions translate the fact that the ''probe'' distribution $\mu$ can simultaneously ``extract''  some mass $\beta$ from $P$ and $\alpha$ from $Q$.
Note that the PR set is included within $[0,1]^2$ and it is a cone, meaning that $\forall (\alpha, \beta)\in \PRD(P,Q)$ and $0\leq \gamma\leq 1$ then $(\gamma\alpha, \gamma \beta)\in\PRD(P,Q)$. As a result, this set is characterized by its (upper-right) Pareto frontier denoted by $\dPRD(P,Q)$ which can be parameterized as $\dPRD(P,Q) = \{(\alpha_\lambda,\beta_\lambda), \lambda \in\bar\R^+\}$ with 
\begin{equation}
\label{eq:primalPR}
\begin{split}
\alpha_\lambda &= (\lambda P \wedge Q)(\omega)\\
\beta_\lambda &= \tfrac{\alpha_\lambda}{\lambda}
\end{split}
\end{equation}
where $\wedge$ is the minimum operator between two measures (see \cite{simon_RevisitingPrecisionRecall_2019} for details).

This whole curve captures both extreme precision and recall values corresponding to $\alpha_\infty$ and $\beta_0$ which play a central role in the later literature starting from \cite{kynkaanniemi_ImprovedPrecisionRecall_2019}. In addition, it also describes how similarly the mass is distributed within the common support of $P$ and $Q$ (see \cite{siry_TheoreticalEquivalenceSeveral_2022} for details). 
Interestingly, this curve can be also characterized in a dual way, based on a specific two-sample classification problem \cite{simon_RevisitingPrecisionRecall_2019}.
%In short, one disposes of a finite set $\mathcal X$ of examples from $P$ and others in $\mathcal Y$ sampled from $Q$. 
%Then the task corresponds to predict if a given example was taken from $P$ or $Q$. Then,
In short, for a sample $Z=U X +(1-U) Y \sim \frac 12 (P+Q)$ (that is $U$ is coin flip, $X\sim P$ and $Y\sim Q$), the task consists in predicting whether $U=1$ (i.e. $Z=X\sim P$). Then,
\begin{equation}
\label{eq:dualPR}
\begin{split}
    \alpha_{\lambda}(P,Q) &= \min_{f \in \F} \left\{ \lambda \cdot \fpr(f) + \fnr(f) \right\} \\
    \beta_{\lambda}(P,Q) &= \min_{f \in \F} \left\{ \fpr(f) + \frac{\fnr(f)}{\lambda} \right\} 
\end{split}
\end{equation}
where the hypothesis class $\F$ is composed of all binary classifiers on $\Omega$, and $\fnr(f)$ (resp. $\fpr(f)$) represents the false negative (resp. positive) rate of the classifier $f$, that is to say the probability that a sample $Y\sim Q$ was classified as a sample from $P$ (resp. vice versa). More precisely, 
\begin{equation}
    \label{eq:fpr}
    \fpr(f) = \int 1-f dP \text{ and } \fnr(f)=\int f dQ
\end{equation}

\subsection{Re-assessing extreme precision-recall values}

In the literature, the accepted expression of the extreme precision is $\alpha_\infty(P,Q):=\lim_{\lambda\to\infty} \alpha_\lambda(P,Q) = Q(\supp(P))$. 
In fact, this identity is flimsy and requires to be amended mainly because the support of a distribution is defined up to null sets for that distribution. 
In the incriminated identity, the issue stems from the fact that adding a $P$-null set can change the $Q$-mass of the set, and therefore the right-hand side is not well characterized.
Let us clarify a few notions.
\begin{definition}[support and co-support]
Let $A$ be a measurable subset of $\measSpace$.
We say that $A$ is a
\begin{itemize}
    \item support of $P$, denoted\footnote{This is a slight abuse of notations.}  $A=\supp(P)$, iff  $P(A^c)=0$. 
    \item co-support of $P$ and $Q$  denoted $A=\cosupp(P,Q)$ iff 
    (
    $(P\wedge Q)(A^c)=0$ 
    and $\forall B\subset A$, $P(B)=0\Leftrightarrow Q(B)=0$)
\end{itemize}
\end{definition}
As the reader may notice, the second notion is characterized up to sets that are simultaneously $P$ and $Q$ null. More precisely, we have the following result.
\begin{proposition}
    \label{thm:cosupp}
    Let $P, Q$ two distributions. Then all co-supports of $P$ and $Q$ have the same $Q$-mass and $$\alpha_\infty(P,Q)=Q(\cosupp(P,Q))$$
\end{proposition}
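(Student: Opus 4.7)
The plan is to reduce everything to densities with respect to a common dominating measure and identify $Q(A)$, for any co-support $A$, with a canonical quantity.

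First, introduce $\mu = \tfrac{1}{2}(P+Q)$ (or any measure dominating both $P$ and $Q$), and let $p = dP/d\mu$ and $q = dQ/d\mu$. A standard fact is that $P\wedge Q$ then has density $\min(p,q)$ w.r.t.\ $\mu$.

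Next, I would show that for any co-support $A$ of $P$ and $Q$, one has $Q(A) = Q(\{p>0\})$. The first condition in the definition, $(P\wedge Q)(A^c)=0$, gives $\min(p,q)=0$ $\mu$-a.e.\ on $A^c$, so up to $\mu$-null sets $A^c \subset \{p=0\}\cup\{q=0\}$. Since $q$ vanishes on $\{q=0\}$, this yields
\begin{equation*}
Q(A^c) \;=\; Q\bigl(A^c \cap \{p=0\}\bigr).
\end{equation*}
Now the second condition in the definition of co-support says that on $A$, $P$-null sets coincide with $Q$-null sets; applied to $B := A\cap\{p=0\}$, which satisfies $P(B)=0$, this gives $Q(A\cap\{p=0\})=0$. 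Combining,
\begin{equation*}
Q(\{p=0\}) \;=\; Q(A\cap\{p=0\}) + Q(A^c\cap\{p=0\}) \;=\; Q(A^c),
\end{equation*}
so $Q(A) = 1 - Q(\{p=0\}) = Q(\{p>0\})$. This is independent of the particular co-support chosen, which settles the first claim.

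Finally, I would compute $\alpha_\infty(P,Q)$ from the primal formula \eqref{eq:primalPR}: for every $\lambda>0$,
\begin{equation*}
\alpha_\lambda(P,Q) \;=\; (\lambda P \wedge Q)(\Omega) \;=\; \int \min(\lambda p, q)\,d\mu.
\end{equation*}
As $\lambda\to\infty$, the integrand increases pointwise to $q\,\mathbf{1}_{\{p>0\}}$, so by monotone convergence $\alpha_\infty(P,Q) = Q(\{p>0\})$, which we just identified with $Q(A)$ for any co-support $A$. The main subtlety to watch out for is not any deep analytic step but the clean bookkeeping of which sets are $P$- or $Q$- null: one must be careful that the two halves of the co-support definition are each used exactly once (one to control $A^c\cap\{q=0\}$, the other to control $A\cap\{p=0\}$), which is what makes the identification well defined despite the inherent indeterminacy of $\supp(P)$ alone.
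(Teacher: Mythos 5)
Your proof is correct, and it takes a genuinely different route from the paper's.

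The paper establishes the identity $\alpha_\infty(P,Q)=Q(\cosupp(P,Q))$ by working entirely with the dual (classification) formulation: it invokes the characterization $\alpha_\infty=\min_{A:\,P(A^c)=0}Q(A)$ from \cite{simon_RevisitingPrecisionRecall_2019}, picks a minimizer $A^*$, and then explicitly surgically removes the set $\cap_{\lambda>0}\{\lambda P>Q\}$ to manufacture a co-support with the right $Q$-mass. The uniqueness half is handled by a set-theoretic contradiction on two co-supports. You instead pass to densities $p,q$ with respect to the dominating measure $\mu=\tfrac12(P+Q)$, show that any co-support must coincide with $\{p>0\}$ up to $Q$-null sets by applying the two halves of the definition to $A^c\cap\{q=0\}$ and $A\cap\{p=0\}$ respectively, and then compute $\alpha_\infty$ directly from the primal formula $\alpha_\lambda=\int\min(\lambda p,q)\,d\mu$ via monotone convergence. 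This buys a cleaner and more self-contained argument: uniqueness and the identification with $\alpha_\infty$ both fall out of the same canonical quantity $Q(\{p>0\})$, without needing to construct a particular co-support from a risk minimizer. One small thing you leave implicit is that a co-support actually exists; this is easy to supply (e.g.\ $\{p>0\}\cap\{q>0\}$ is one, by the same density bookkeeping), but you may want to note it so the statement is not vacuous. As a side remark, your route also avoids a slippery step in the paper's sketch, where it asserts $P(A'^c)=0$ for a co-support $A'$ even though the definition only guarantees $(P\wedge Q)(A'^c)=0$; your density-level argument handles the analogous point correctly by separating $A^c\cap\{p=0\}$ from $A^c\cap\{q=0\}$.
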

\begin{proof}(Sketch) 
First, if $A, A'$ are two co-supports. Then $Q(A)=Q(A\cap A') = Q(A')$. Indeed, if $Q(A)\geq Q(A\cap A')$ then letting $B=A\setminus A'\subset A$ and $Q(B)>0$. Yet $B\subset A'^c$ so that $P(B)\leq P(A'^c)=0$ yielding a contradiction.

Second, let us exhibit a co-support $C$ that verifies $Q(C)=\alpha_\infty$.
In \cite{simon_RevisitingPrecisionRecall_2019}, it is shown that Eq.~\ref{eq:dualPR} can be restated as $\alpha_\infty = \min_{A\text{ s.t. } P(A^c)=0} Q(A)$. 
Let $A^*$ one of the minimizers. Without further care, $A^*$ should be merely a particular support of $P$ but could still not be a co-support. 
We therefore need to filter out any part of the space that charges $P$ but not $Q$ (which will make it a co-support without affecting its $Q$-mass).
To do so, we consider $C = A^*\setminus \cap_{\lambda>0} \{\lambda P > Q\}$. 
First, the monotone convergence theorem implies that $Q(C)=Q(A^*)=\alpha_\infty$. 

It remains to show that $C$ is indeed a co-support.
Notice that $P\wedge Q(C^c)=(P\wedge Q)({A^*}^c\bigcup \cap_{\lambda>0} \{\lambda P > Q\})\leq P({A^*}^c) + Q(\cap_{\lambda>0} \{\lambda P > Q\})=0$ (because of the constraint on $A^*$ for the first summand, and by the monotone convergence theorem again for the other summand).

Besides let $B\subset C \subset (\cap_{\lambda>0} \{\lambda P > Q\})^c=\cup_{\lambda>0} \{\lambda P \leq Q\}$ so that $Q(B)=0 \implies P(B)=0$.
Conversely, if $P(B)=0$ let us show that $Q(B)=0$. 
To do so let us reason by contradiction, by assuming that $Q(B)>0$. Then $A=A^*\setminus B$ verifies the constraint $P(A)=0$ and $Q(A)=Q(A^*)-Q(B)$ (because $B\subset C\subset A^*$). Then $Q(A)<Q(A^*)$ would contradict the definition of $A^*$.
\end{proof}

This first result will bring some understanding on a key difference between the approach of IPR and the one of coverage: the former being linked to the erroneous formula $Q(\supp(P))$ while the latter relates more to the correct one (as will be seen in the next section).
Yet, there are a few caveats that apply even to the correct version of $\alpha_\infty$. 
First, it is important to realize that this metric is impacted by the tails of $P$ and $Q$ even if they decrease very fast. An illuminating example is the following $P=\mathcal{N}(0,1)$ and $Q=\mathcal{N}(\mu,1)$. Whatever the value of $\mu$ (be it extremely large), it remains that $P$ and $Q$ have full co-support and therefore $\alpha_\infty(P,Q)=1$. 
The first negative impact of this observation is that $\alpha_\infty$ and $\beta_0$ provide a very weak characterization of the relation between $P$ and $Q$.
The second negative impact concerns the estimation of $\alpha_\infty$: namely, the tails of $P$ and $Q$ are elusive based on empirical samples, making this extreme precision the most challenging to evaluate.
Both of these observations have gone unnoticed by the previous approaches starting from \cite{kynkaanniemi_ImprovedPrecisionRecall_2019} as they purposely focused on the extreme values.
In addition to those two arguments, we would like to highlight that estimating the mass of a support is not a standard topic in machine learning. As a result, taking a binary classification standpoint as in \cite{simon_RevisitingPrecisionRecall_2019} will bring much more useful hindsight to design estimators correctly.

\subsection{Improved PR metric and follow-up works}

In this section, we describe a few published metrics related to extreme precision and recall.\footnote{We will focus on the estimate of $\alpha_\infty(P,Q)$ because swapping the role of $P$ and $Q$ entails the extreme recall $\beta_0$.}
Henceforth, we assume that one disposes of a finite set $\mathcal X$ of examples from $P$ and others in $\mathcal Y$ sampled from $Q$.

\paragraph{IPR} Proposed in \cite{kynkaanniemi_ImprovedPrecisionRecall_2019}, the Improved Precision Recall metric is given by
\begin{equation}
    \label{eq:improved}
    \hat\alpha_\infty^{iPR} := \frac{1}{\#\mathcal Y}\sum_{y\in \mathcal{Y} } \1_{\exists x\in \mathcal X, y\in \BKNN^{\mathcal X}(x)}
\end{equation}
where $\mathcal X$ and $\mathcal Y$ are the observed samples from $P$ and $Q$ respectively, 
and $\BKNN^{\mathcal X}(x)$ represents the kNN ball around $x$ computed within the set $\mathcal X$.
This value can be interpreted as the empirical estimate of $Q(\supp(P))$ where samples from $\mathcal Y$ are used to estimate the $Q$ probability, and those from $\mathcal X$ are used to estimate the support of $P$ as the union of kNN balls. 

\paragraph{Coverage} First proposed as an estimate of $\beta_0$ in \cite{naeem_ReliableFidelityDiversity_2020}, it was also adapted to $\alpha_\infty$ in \cite{khayatkhoei_EmergentAsymmetryPrecision_2023}.
\begin{equation}
    \label{eq:coverage}
    \hat\alpha_\infty^{cov} := \frac{1}{\#\mathcal Y}\sum_{y\in \mathcal{Y} } \1_{\exists x\in \mathcal X, x\in \BKNN^{\mathcal Y}(y)}
\end{equation}
Note that compared to Eq.~\eqref{eq:improved}, the condition $y\in \BKNN^{\mathcal X}(x)$ is merely replaced by $x\in \BKNN^{\mathcal Y}(y)$. 
The samples $x$ are naturally in regions of positive $P$-mass,   so that this estimate has an interpretation in terms of $Q(\cosupp(P,Q))$ : samples from $\mathcal Y$ are again used to estimate empirical probability w.r.t $Q$ but they are also used to estimate the support of $Q$ (again as the union of kNN balls associated to $\mathcal Y$) and samples from $\mathcal X$ are obviously within the support of $P$.

\paragraph{EAS} \cite{khayatkhoei_EmergentAsymmetryPrecision_2023} propose to combine both previous estimates by taking their minimum
\begin{equation}
    \label{eq:eas}
    \hat\alpha_\infty^{eas} := \min(\hat\alpha_\infty^{iPR}, \hat\alpha_\infty^{cov})
\end{equation}

\paragraph{PRC}
Proposed in \cite{cheema_PrecisionRecallCover_2023} Precision Recall Cover is an extension of coverage:

\begin{equation}
    \label{eq:prc}
\hat\alpha_\infty^{PRC} = \frac{1}{\#\mathcal{Y}}\sum_{y\in \mathcal{Y} } \1_{\#\{x\in\mathcal X/ x\in B_{kNN}^\mathcal{Y}(y)\} \geq k'}
\end{equation}
where $k'\in\N^*$ is an additional hyper-parameter: setting $k'=1$ makes this estimator identical to $cov$.

\paragraph{PPR} Last, Probabilistic Precision Recall was proposed in \cite{park_ProbabilisticPrecisionRecall_2023}:
\begin{equation}
    \label{eq:ppr}
    \hat\alpha_\infty^{PPR} := \frac{1}{\#\mathcal Y}\sum_{y\in \mathcal{Y} } \left( 1-\prod_{x\in\mathcal X} \tau(\Vert y-x \Vert) \right)
\end{equation}
where $\tau(d)=\max(0,1-\tfrac dR)$ is a fixed bandwidth tent kernel.

\section{Re-interpretation and improvements}

In this section, we start by re-interpreting the $iPR$ and $cov$ estimators in terms of the dual theoretical expression in \eqref{eq:dualPR}, and then build upon this new insight to both extend the PR estimates as entire trade-off curves, as well as propose new variants.
Note that, in order to give a more complete picture of the state-of-the-art, we have presented three alternative metrics, namely $EAS$, $PRC$ and $PPR$.
Note however, that they all work in a similar fashion to IPR and coverage.
Therefore, for the sake of clarity, we will only deal with either $iPR$ or $cov$, for the remainder of the paper.

\subsection{Classification interpretation} One may notice that every estimators mentioned in the previous section reads as 
\begin{equation*}
    \hat\alpha^{M}_{\infty}=\widehat\fnr(f^{M}_\infty)
\end{equation*}
where $\widehat\fnr$ is the empirical FNR, $M$ is a reference to the metric approach (e.g. $iPR$) and $f^M_\infty$ is a classifier specific to the approach. 
In particular, one has
$f^{iPR}_\infty(z)=\1_{\#\{x\in \mathcal X / y\in\BKNN(x)\}\geq 1}$ and $f^{cov}_\infty(z)=\1_{\#\{x\in \mathcal X / x\in\BKNN(y)\}\geq 1}$.

In both cases, by design $\widehat\fpr(f^{M}_\infty)=0$ because the classifier is equal to $1$ on training samples from $\mathcal X$.
This is reminiscent of the form of $\alpha_\lambda$ in Eq.~2 when $\lambda\to\infty$:
\begin{equation*}
    \alpha_\infty = \min_{f\in\F\text{ s.t. } \fpr(f)=0} \fnr(f)
\end{equation*}
In fact, one can analyze each approach $M$, as a mere empirical version of that equation, under a restricted hypothesis class, namely:
\begin{equation*}
    \F^{M} = \{f_\gamma^M / \gamma \in [0,+\infty]\}
\end{equation*}

Note that obviously, the hypothesis class cannot be unequivocally determined by $f_\infty^M$. Yet, we shall see that natural families emerge for both $iPR$ and $cov$. Let us describe them now.
\paragraph{IPR} In that case, we set 
$$f^{iPR}_\gamma(z)=\1_{\gamma\#\{x\in \mathcal X / z\in\BKNN^{\mathcal X}(x)\}\geq \#\{y\in \mathcal Y / z\in\BKNN^{\mathcal Y}(y)\}}$$
Note that this classifier is a Kernel Density Estimator (KDE) of the form 
$$f^{iPR}_\gamma(z)=\1_{\frac{\hat p(z)}{\hat q(z)}\geq \frac 1\gamma}$$
where $\hat p(z) \propto \sum_{x\in\X} \1_{\BKNN^{\mathcal X}(x)}(z)$ and similarly for $\hat q(z)$. It is therefore a KDE classifier with adaptive bandwidth.

\paragraph{Coverage} Here we can set 
$$f^{cov}_\gamma(z)=\1_{\gamma\#\{x\in \mathcal X / x\in\BKNN^{\mathcal Y}(z)\}\geq \#\{y\in \mathcal Y / y\in\BKNN^{\mathcal X}(z)\}}$$

This resembles to a classical kNN classifier up to a minor difference: it is more standard to use the same kNN structure for both classes, that is $\BKNN^{\mathcal X\cup \mathcal Y}$ rather than using a separate one per class.
Interestingly, one can verify that the condition $\exists x\in \mathcal X\text{ s.t. } x\in\BKNN^{\mathcal Y}(y)$ is in fact equivalent\footnote{One direction is trivial since $\BKNN^{\mathcal X\cup\mathcal Y}(y) \subset \BKNN^{\mathcal Y}(y)$, the other requires a bit more 
reasoning: assuming $\exists x\in\BKNN^{\mathcal Y}(y)$ one may consider in particular the $x$ closest to $y$ and conclude that it belongs to $\BKNN^{\mathcal X\cup\mathcal Y}(y)$.} to $\exists x\in  \mathcal X\text{ s.t. } x\in\BKNN^{\mathcal X\cup\mathcal Y}(y)$. Therefore, we may as well choose
$$f^{knn}_\gamma(z)=\1_{\gamma\#\{x\in \mathcal X / x\in\BKNN^{\mathcal X\cup\mathcal Y}(z)\}\geq \#\{y\in \mathcal Y / y\in\BKNN^{\mathcal X\cup\mathcal Y}(z)\}}$$
to build the hypothesis class of $cov$. 

\paragraph{Note on symmetry}  For symmetry reasons between precision and recall, we use the previous definitions of $f^M_\gamma$  for $\gamma\geq 1$  and favor a strict inequality over a loose one for $\gamma<1$. 

\subsection{Extension and improvements}
At this stage, it is quite easy to extend the extreme precision and recall estimators into entire curves. It suffices to use Empirical Risk Minimizer approach over the hypothesis class $\F^M$ for the risk arising in Eq.~\ref{eq:dualPR}. That gives us:
\begin{equation}
    \label{eq:hatPRC}
    \hat\alpha^M_\lambda = \min_{\gamma} \lambda\widehat\fpr(f^M_\gamma) +\widehat\fnr(f^M_\gamma) 
\end{equation}

\paragraph{Splitting} The first improvement that calls upon us is to merely split the samples in two: one part used for fitting the classifier $\mathcal X^T\cup\mathcal Y^T$ and one part used for evaluation $\mathcal X^V\cup\mathcal Y^V$.
In that case, the law of large numbers applies which is crucial for the consistency of $\hat \alpha_\lambda^M$. 

Note however that because of splitting, it is possible that none of the classifier $f^M_\gamma$ ensures a null FPR. As a result, it is possible that $\hat\alpha^M_\lambda>1$. 
As a remedy, we always complement $\F^M$ with the trivial classifiers $f\equiv 1$ and $f\equiv 0$ that predict either $P$ or $Q$ uniformly.

\paragraph{Hyper-parameter $k$}
In addition to the introduction of splitting, we consider modifying either the hyper-parameter $k$ for each approach. Concerning $k$, in the kNN literature (see e.g. \cite{devroye2013probabilistic}), it is known that as the number of samples $n$ gets bigger,  $k$ can also increase but at smaller rate (this will be a key element to ensure the consistency of the kNN estimator in Thm~\ref{thm:optimality}). We therefore consider for each approach, setting $k=
\sqrt n$ in place of $k=4$ similarly to previous works. 
%\hl{Note that with large $k$, the negative bias introduced by the lack of data split will become milder.}

\paragraph{Bandwidth}
Besides, considering $iPR$, we have seen that it corresponds to an adaptive bandwidth Kernel Density Estimator (with a constant kernel). 
This design choice results in having bigger bandwidth around samples located at low density regions. 
It is responsible for the high sensibility to outliers that was pointed out in several followup works. 
We therefore consider as a simple alternative, a fixed bandwidth KDE that we will refer to as a Parzen classifier.
In that case,
$$f_\gamma^{parzen}(z) = \1_{\frac{\hat p(z)}{\hat q(z)}\geq \frac 1\gamma}$$ with
$\hat p(z) \propto \sum_{x\in\X} \1_{\Vert x-z\Vert \leq \rho_{\mathcal X})}$ and similarly for $\hat q$. In comparison to $iPR$ the bandwidth $\rho_{\mathcal X}$ is computed as the average $knn$ radius\footnote{This choice for the bandwidth is inspired by \cite{park_ProbabilisticPrecisionRecall_2023} although the Parzen variant differs from their $PPR$ estimator which does not resemble any standard classifier.} over the dataset $\mathcal X$ (and similarly for $\rho_{\mathcal Y}$) instead of using a specific bandwidth per sample. 

\subsection{Consistency analysis}

The iPR approach is known to be biased in general, since when $P=Q$, it does lead to an estimate of $\alpha_\infty$ that can be far from the true value of $1$. Indeed in such case, even when considering $n=\#\mathcal X = \#\mathcal Y\to\infty$,  $\lim_{n\to\infty} \E[\hat\alpha_\infty^{iPR}]$ can be much smaller than $1$ (see the Gaussian case in \cite{naeem_ReliableFidelityDiversity_2020}). 
On the contrary, \citet{naeem_ReliableFidelityDiversity_2020} shows that when $P=Q$,  coverage is consistent. By symmetry so is $\hat\alpha_\infty^{cov}$.

\definecolor{color_IPR}{RGB}{23,116,180}
\definecolor{color_KNN}{RGB}{255,136,32}
\definecolor{color_PARZ}{RGB}{20,160,20}
\definecolor{color_COV}{RGB}{214,27,28}
\definecolor{color_GT}{RGB}{110,24,189}
\newlength{\myheight} 
\setlength{\myheight}{0.23\textwidth}
\begin{figure*}[t]
    \centering
    \begin{tabular}{cccc}
        \multicolumn{2}{c}{\emph{50\% split validation/train}} 
        &
        \multicolumn{2}{c}{\emph{without split}}  
        \\
        \includegraphics[height=\myheight,trim={0 0 160 0},clip]{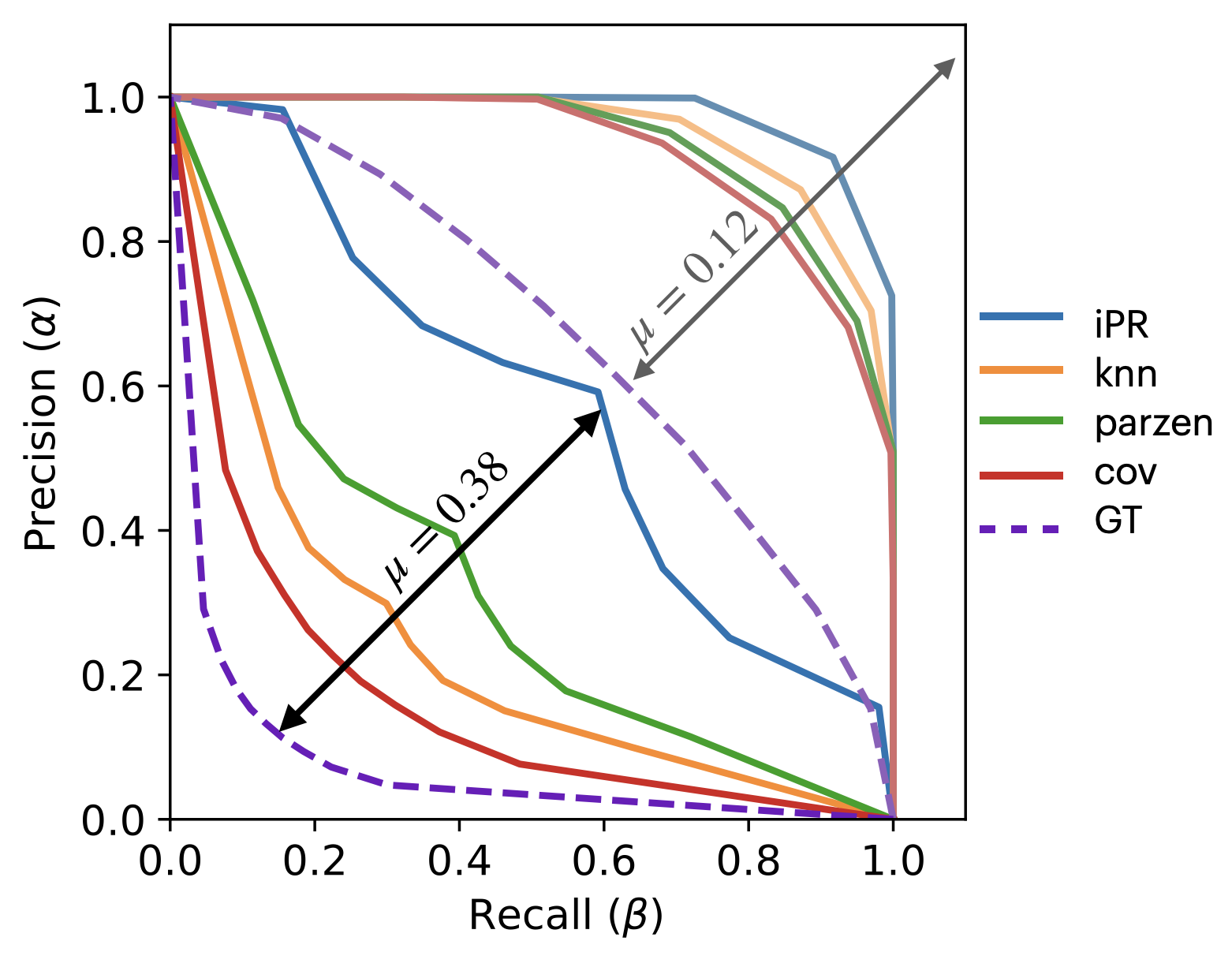}
        &
        \includegraphics[height=\myheight,trim={20 0 160 0},clip]{
        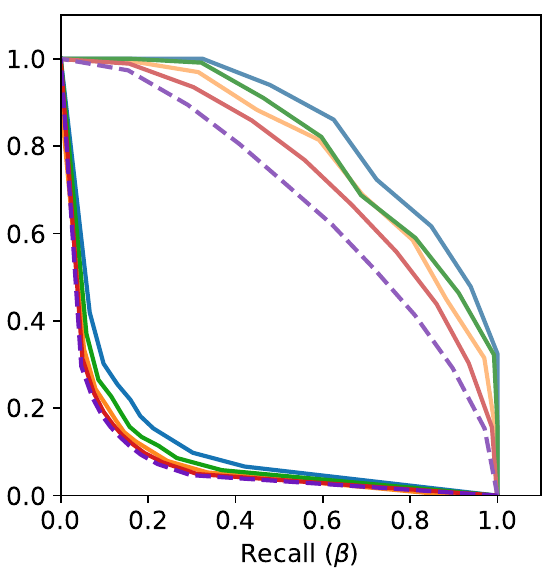}
        &
        \includegraphics[height=\myheight,trim={20 0 160 0},clip]{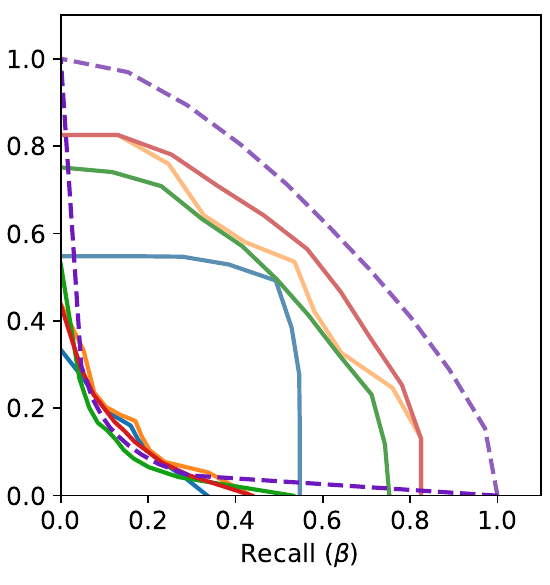}
        &
        \includegraphics[height=\myheight,trim={20 0 160 0},clip]{
        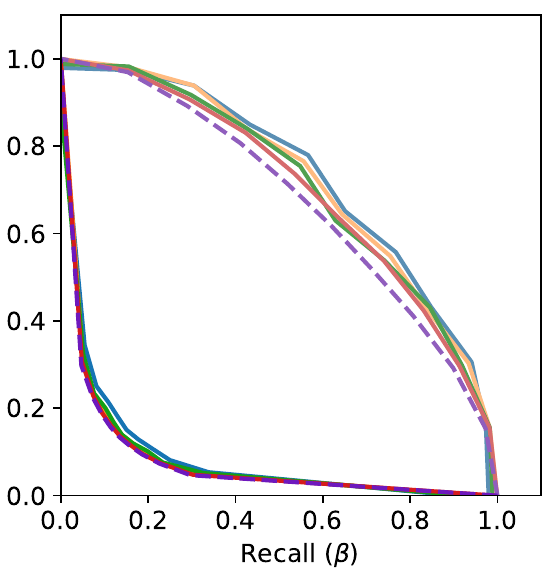}
         \\
        $k=4$ & $k = \sqrt n$ & 
        $k=4$ & $k = \sqrt n$ 
    \end{tabular}

    \caption{\textbf{Comparing two shifted Gaussians.} 
    The Ground-Truth PR curve (
    {\color{color_GT} \bfseries -~-\textsc{GT}})
    is compared to empirical estimates from various NN-classifiers:
    {\color{color_IPR} \bfseries --\textsc{iPR}},
    {\color{color_KNN} \bfseries --\textsc{knn}},
    {\color{color_PARZ} \bfseries --\textsc{Parzen}}, and 
    {\color{color_COV} \bfseries --\textsc{Coverage}}.
    Here $P \sim \mathcal{N}(0,\mathbb{I}_{d})$ and $ Q \sim \mathcal{N}(\mu \mathbf{1}_{d},\mathbb{I}_{d})$ with $d=64$ dimensions and $\mu=\frac{1}{\sqrt d}\approx.12$ or $\mu=\frac{3}{\sqrt d}\approx.38$. 
    $n=10$K points are sampled using $k=4$ or $k=\sqrt n$ for NN comparison, with or without dataset validation/train split.
    (Curves are averaged over 10 random samples, see Appendix).
    }
    \label{fig:shift-gauss}
\end{figure*}

We extend the above-mentioned consistency result to the entire PR curve associated to our kNN approach and in the general case of two distributions $P$ and $Q$.

 \begin{theorem}
 \label{thm:optimality}
    Let $\lambda\in\bar\R^+$, $k\geq 3$ and $n=\#\mathcal X=\#\mathcal Y$. Letting $k\to\infty$ and $\tfrac{k}{n}\to 0$, and denoting
    $$\Gamma^*_\lambda=\argmin_{\gamma} \lim_{k\to\infty, \tfrac{k}{n}\to 0} \E[\lambda \fpr(f_\lambda^{kNN})+\fnr(f_\lambda^{kNN})]$$
    Then
    \begin{enumerate}
        \item $\lambda\in\Gamma^*_\lambda$
        \item $\E[\hat\alpha_\lambda^{kNN}] \to \alpha_\lambda$ assuming that data split was used.
    \end{enumerate}
 \end{theorem}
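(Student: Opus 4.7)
The plan is to reformulate the theorem as a Bayes-consistency result for the weighted risk
$$R_\lambda(f):=\lambda\fpr(f)+\fnr(f)=\lambda\!\int(1-f)dP+\int f\,dQ,$$
combined with a concentration argument made tractable by the data split. By Eq.~\ref{eq:dualPR}, $\alpha_\lambda=\min_{f\in\F}R_\lambda(f)$, and denoting by $p,q$ densities of $P,Q$ with respect to a common dominating measure and $\eta:=p/(p+q)$, the Bayes minimizer is $f^*_\lambda(z)=\1_{\lambda\eta(z)\geq 1-\eta(z)}$. Since $\#\mathcal X^T=\#\mathcal Y^T$, the counts $N_X(z):=\#\{x\in\mathcal X^T: x\in\BKNN^{\mathcal X^T\cup\mathcal Y^T}(z)\}$ satisfy $N_X(z)+N_Y(z)=k$, so the kNN rule rewrites
$$f_\gamma^{kNN}(z)=\1_{\eta_n(z)\geq 1/(1+\gamma)},\qquad \eta_n(z):=N_X(z)/k,$$
identifying $f_\gamma^{kNN}$ with the classical kNN plug-in classifier for $\eta$ at threshold $1/(1+\gamma)$.

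For assertion (1), I would invoke the universal consistency of the kNN regressor (Stone's theorem, Ch.~6 of \cite{devroye2013probabilistic}) which, under $k\to\infty$ and $k/n\to 0$, yields $\eta_n\to\eta$ in $L^1(P+Q)$. Combined with the standard plug-in bound $|R_\lambda(\1_{\eta_n\geq t})-R_\lambda(\1_{\eta\geq t})|\leq(1+\lambda)\,\E|\eta_n-\eta|$ (obtained from $|\1_{\eta_n\geq t}-\1_{\eta\geq t}|\leq\1_{|\eta-t|\leq|\eta_n-\eta|}$), this gives
$$\lim_{k\to\infty,\,k/n\to 0}\E[R_\lambda(f_\gamma^{kNN})]=R_\lambda(\1_{\gamma\eta\geq 1-\eta}),$$
a quantity minimized over $\gamma\in\bar\R^+$ precisely when the threshold $1/(1+\gamma)$ matches the Bayes threshold $1/(1+\lambda)$, i.e.\ at $\gamma=\lambda$. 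This proves $\lambda\in\Gamma^*_\lambda$ and in particular $\lim\E[R_\lambda(f_\lambda^{kNN})]=\alpha_\lambda$.

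For assertion (2), the data split means that, conditionally on the training sample, $\widehat\fpr(f_\gamma^{kNN})$ and $\widehat\fnr(f_\gamma^{kNN})$ are empirical averages of i.i.d.\ Bernoulli variables with respective means $\fpr(f_\gamma^{kNN})$ and $\fnr(f_\gamma^{kNN})$. In particular $\E[\widehat R_\lambda(f_\gamma^{kNN})]=\E[R_\lambda(f_\gamma^{kNN})]$ exactly, and plugging $\gamma=\lambda$ into the min yields the upper bound $\E[\hat\alpha_\lambda^{kNN}]\leq\E[R_\lambda(f_\lambda^{kNN})]\to\alpha_\lambda$.

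The matching lower bound is the main obstacle since it requires uniform control over all $\gamma$. The crux is that, conditional on the training set, the family $\{f_\gamma^{kNN}\}_{\gamma\in[0,+\infty]}$ takes at most $k+2$ distinct values (one per integer value of $N_X\in\{0,\dots,k\}$, plus the two trivial classifiers appended to $\F^{kNN}$). A Hoeffding inequality applied on the validation sample, together with a union bound over these finitely many classifiers, then delivers $\E\bigl[\sup_\gamma|\widehat R_\lambda(f_\gamma^{kNN})-R_\lambda(f_\gamma^{kNN})|\bigr]=O\bigl((1+\lambda)\sqrt{\log k/n}\bigr)$, which vanishes since $k/n\to 0$. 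Combining this with the pointwise lower bound $R_\lambda(f)\geq\alpha_\lambda$ valid for every classifier $f$, one obtains $\E[\hat\alpha_\lambda^{kNN}]\geq\alpha_\lambda-o(1)$, closing the proof.
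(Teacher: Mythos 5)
Your proof is correct but takes a genuinely different route from the paper. The paper works directly with the asymptotic error expression: for fixed $k$, it identifies $\lim_{n\to\infty}R_\lambda(f^{kNN}_\gamma)$ with an expectation over $Z$ of a binomial tail functional $\mu_\lambda(\eta(Z))$ (adapting Theorem~5.2 of \cite{devroye2013probabilistic} to the class-weighted risk), and then sends $k\to\infty$ by showing pointwise $\mu_\lambda(p)\to\lambda p\wedge(1-p)$ via a case analysis and Hoeffding's inequality applied to $\mathrm{Binom}(k,p)$. You instead recognize $f_\gamma^{kNN}$ as the plug-in classifier $\1_{\eta_n\geq 1/(1+\gamma)}$ built from the kNN regression estimate $\eta_n=N_X/k$, invoke Stone's universal consistency theorem to get $\E|\eta_n-\eta|\to0$, and transfer this through the excess-risk (plug-in) inequality. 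Both live in adjacent chapters of \cite{devroye2013probabilistic}; your version is more modular and avoids the binomial bookkeeping, though it is slightly less self-contained since it offloads the concentration to Stone's theorem rather than exhibiting it directly.

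You are also more careful than the paper on assertion (2). The appendix reduces both assertions to the single statement $R_\lambda(f^{kNN}_\lambda)\to\alpha_\lambda$ and leaves implicit the passage from this fixed-$\gamma$ result to the behavior of the minimizer $\hat\alpha_\lambda^{kNN}=\min_\gamma\widehat R_\lambda(f_\gamma^{kNN})$. As you correctly note, the split yields the upper bound $\E[\hat\alpha_\lambda^{kNN}]\leq\E[\widehat R_\lambda(f_\lambda^{kNN})]=\E[R_\lambda(f_\lambda^{kNN})]\to\alpha_\lambda$ for free, but a matching lower bound needs uniform control of $\widehat R_\lambda-R_\lambda$ over $\gamma$. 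Your observation that, conditional on the training set, the family $\{f_\gamma^{kNN}\}_\gamma$ collapses to at most $k+2$ distinct functions (because $N_X(z)$ ranges over $\{0,\ldots,k\}$), so that Hoeffding plus a union bound gives a deviation of order $\sqrt{\log k/n}\to0$, is exactly what is needed and uses $k/n\to0$ in an essential way. This is a genuine tightening of the paper's argument.

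One small overclaim to flag: the identity $\lim\E[R_\lambda(f_\gamma^{kNN})]=R_\lambda(\1_{\gamma\eta\geq 1-\eta})$ asserted for general $\gamma$ can fail if the distribution of $\eta(Z)$ has an atom at $1/(1+\gamma)$, since the weight $1-(1+\lambda)\eta$ appearing in the excess-risk integrand is proportional to $1/(1+\lambda)-\eta$ rather than to $1/(1+\gamma)-\eta$, so the factor $(1+\lambda)\E|\eta_n-\eta|$ bounds the excess only at the matched threshold. This does not affect the theorem: for assertion (1) you only need $\gamma=\lambda$ (where the atom set contributes zero because $1-(1+\lambda)\eta$ vanishes there) together with the pointwise bound $R_\lambda(f)\geq\alpha_\lambda$, which you already use. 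You may want to restrict the displayed identity to $\gamma=\lambda$ or note that it holds up to the $\eta$-atom issue.
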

 \begin{proof}[Proof (sketch)]     
 The proof is provided in Appendix~\ref{app:proof} and is similar to the standard Bayes consistency results of the kNN classifier (see e.g. \cite{devroye2013probabilistic}[chap 5\& 
 6]).
     It is merely adapted to the fact that the risk is class weighted i.e. $R_\lambda(f)=\lambda \fpr(f) + \fnr(f)$ instead of the classical one $R(f)=\frac 12(\fpr(f)+\fnr(f))$.
 \end{proof}

\section{Experiments}

In this section, we reissue a few experiments on toy datasets that were proposed in the literature. 
In all settings, for the different estimators under scrutiny, we use $n=10$K samples.
In each experiment, the distributions $P$ and $Q$ are known analytically and the ground-truth PR curve can be estimated easily because the Bayes classifier is the likelihood ratio classifier \cite{simon_RevisitingPrecisionRecall_2019} $f_\lambda^*(z)=\1_{\frac{dQ}{dP}(z)\leq\lambda}$. To obtain high accuracy ground-truth curves we resort to a large sample ($n^{GT}=100$K) and estimate $$\hat\alpha^{GT}_\lambda=\lambda \widehat \fpr(f_\lambda^*)+ \widehat\fnr(f_\lambda^*)$$

Based on this PR curve, we can either evaluate the quality of an estimator visually, or use a scalar indicator to summarize the quality of the estimator. 
In particular, we propose to use the Jaccard index (a.k.a IoU) between the ground-truth curve and the estimator under review. 
This index is always smaller than $1$ and the larger the better.

\subsection{Gaussian shifts}\label{sec:exp_shift}

Inspired by recent experiments made in the literature, 
we consider the case where $P$ and $Q$ are two Gaussian distributions with an increasing shift.
In order to match the published experiments, we generate $10$K data points from each Gaussian distribution, in $\mathbb{R}^{64}$.
Even though the number of sampled points is quite high, we decide to run the experiments $100$ times to have robust interpretations of the metrics.
What interests us here is the behavior of the of the estimated curves in comparison to the ground-truth. 
In particular,  unlike \cite{kynkaanniemi_ImprovedPrecisionRecall_2019,naeem_ReliableFidelityDiversity_2020,park_ProbabilisticPrecisionRecall_2023} we reaffirm that the extreme values of the curves should be equal to $1$ as the supports of the two Gaussians are always the same. 

We run the experiments on 5 different methods, using 4 different shifts for the fake Gaussian. We present the resulting curves for only two shifts in Fig.~\ref{fig:shift-gauss} and condense the complete experiment results in Appendix in Table~\ref{tab:mean_iou_shift_k4}. 
In these settings, the standard deviation for each experiment is lower than $10^{-2}$. Whereas the maximum values for the PR curves are always at 1, the curves themselves show the lack of information to compare the two distributions. While methods from the literature, the number of nearest neighbors $k$ used in the manifold estimations is set to 3 (\cite{kynkaanniemi_ImprovedPrecisionRecall_2019}) and to 5 (\cite{naeem_ReliableFidelityDiversity_2020}), we first set $k=4$ then, motivated by \ref{thm:optimality}, we set $k=\sqrt{n}$. We observe that the yielded results are more convincing with $k=\sqrt{n}$, and that the difference between the split and no split scenarios is marginal. As we have theoretical guarantees in the case with split, we decided to keep the combination split and $k=\sqrt n$ for the following experiments (complementary results are presented in the appendix).

\begin{figure*}[!ht]
     \centering
     \begin{subfigure}[b]{0.67\textwidth}

    \includegraphics[width=0.56\textwidth,trim={42, 0, 42, 0}, clip]{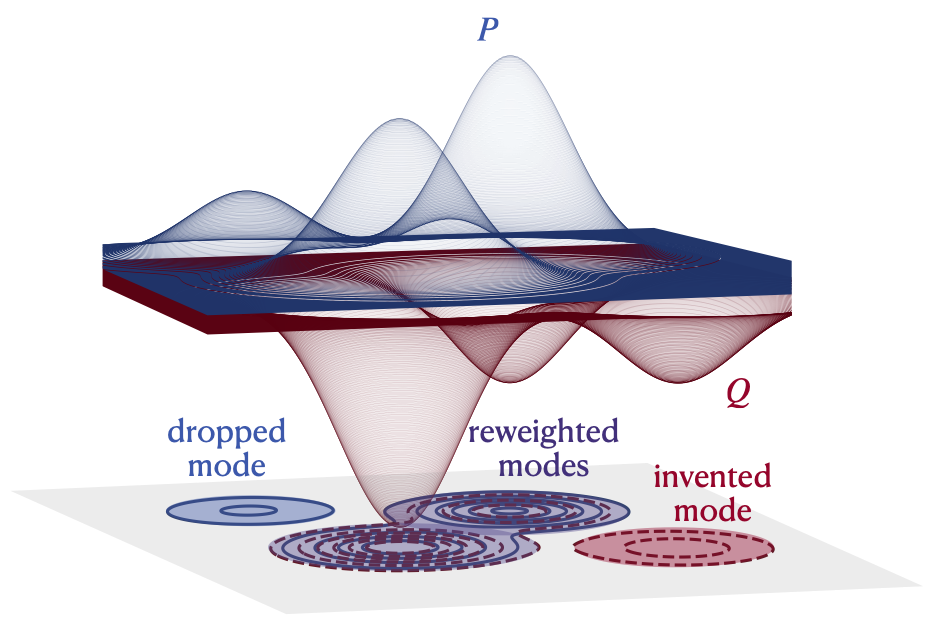}
    \includegraphics[width=0.43\textwidth,trim={8, 0, 30, 0}, clip]{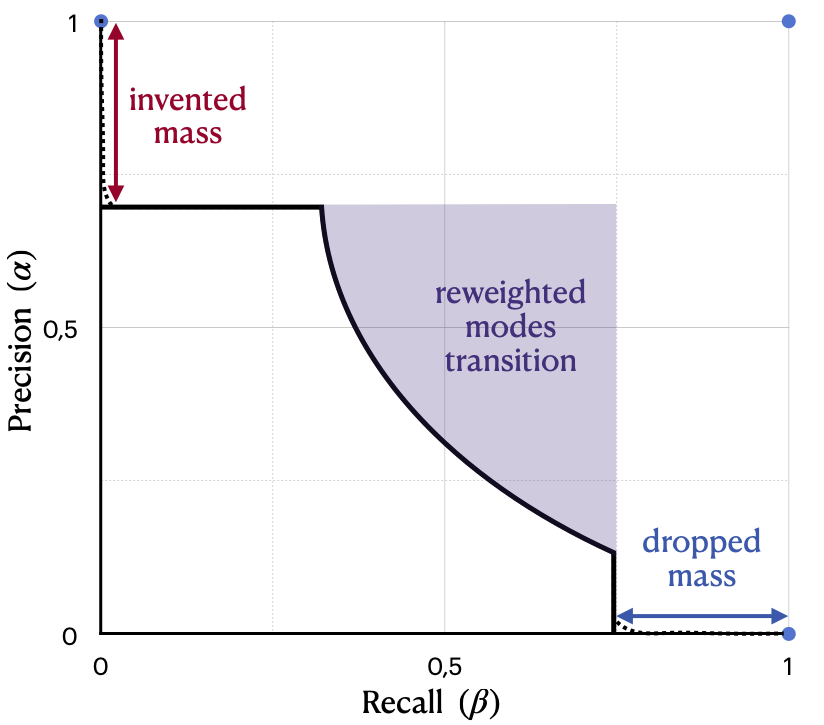}
    \caption{Left: Gaussian Mixture Models $P$ and $Q$ showing mode dropping (only in $P$), mode inventing (only $Q$), and mode re-weighting (in both but distributed differently). Right:~Expected coarse shape of the PR-curve (solid black). Note that due to the infinite tails of the Gaussian modes, the vertical and horizontal transitions are theoretically smooth and reach $1$ (dashed curve).
    }
    \label{fig:mode-diffs}
\end{subfigure}
    \hfill
\begin{subfigure}[b]{0.3\textwidth}
    \includegraphics[width=0.9\textwidth,trim={0 0 135 0},clip]{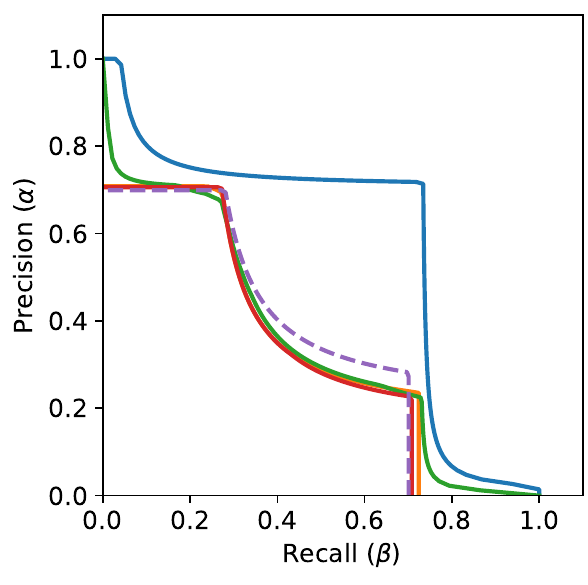}
    \caption{\textbf{Comparing two Gaussian mixtures}.~\ref{fig:mode-diffs}.
    The Ground-Truth PR curve (
    {\color{color_GT} \bfseries -~-\textsc{GT}})
    is compared to empirical estimates from various NN-classifiers:
    {\color{color_IPR} \bfseries --\textsc{iPR}},
    {\color{color_KNN} \bfseries --\textsc{knn}},
    {\color{color_PARZ} \bfseries --\textsc{Parzen}}, and 
    {\color{color_COV} \bfseries --\textsc{Coverage}}.
    %
    %
    %Same setting than Fig.~\ref{fig:shift-gauss}, without averaging and only using split.
    }
    \label{fig:GMM-dim64}
\end{subfigure}
\end{figure*}

\subsection{Gaussian mixture models}
\label{sec:exp_GMM}

In \cite{luzi_EvaluatingGenerativeNetworks_2023} the authors advocate that Inception features are better approximated by Gaussian Mixtures than pure Gaussians. Besides this scenario allows to illustrate the phenomenon of mode dropping (mode present in $P$ but not in $Q$), mode creation (mode present in $Q$ but not in $P$), and mode reweighing (shared modes between $P$ and $Q$ weighted differently) as illustrated in Fig.~\ref{fig:mode-diffs}.
In our toy experiment, we sample points from two GMM  in dimension $d=64$, $n=1$K  samples, splitting is applied, and $k=\sqrt n$ (see Fig.~\ref{fig:GMM-dim64}).
The two GMM $P$ and $Q$ are set as follows:  $P \sim \sum_{\ell} p_\ell\, \mathcal{N}(\mu_\ell \mathbf 1_{d}, \mathbb{I}_{d})$  and  $ Q \sim \sum_{\ell} q_\ell\, \mathcal{N}(\mu_k \mathbf 1_{d}, \mathbb{I}_{d})$ with $d=64$ dimensions and  $\mu_\ell \in \{0, -5, 3, 5\}$. However, $P$ and $Q$ have different weights ($p_\ell$ and $q_\ell$) $p_\ell  \in \{0.3, 0.2, 0.5, 0\}$ $q_\ell  \in \{0, 0.5, 0.2, 0.3\}$. 
%$n=1$k points are sampled and split in half between validation and train, and $k=\sqrt n$. 
kNN, parzen and coverage perform very well with respect to the ground-truth, while IPR overestimates the PR-curve and especially fails at catching the re-weighting transition.

\subsection{Outliers}\label{sec:exp_outlier}
One of the main contributions of \cite{naeem_ReliableFidelityDiversity_2020} over \cite{kynkaanniemi_ImprovedPrecisionRecall_2019} was its robustness to outliers. We here investigate how outliers affect our PRD curves. 
To do so, we simply once again define the $P$ and $Q$ distributions as two shifted Gaussians. 
Similarly to experiments in literature, we then add a single outlier $x_\text{outlier}$ to the real distribution $P$ such that $x_\text{outlier}= \mathbf 4$. % 4.\mathbf 1_{d}
All the cases are considered in appendix, yet the observations are simple: as already reported in the aforementioned works, we observe that iPR classifier is indeed affected by such a perturbation for low k-NN comparison ($k=4$) and without split. This sensitivity is especially strong near the extreme values. We also notice that this effect is easily mitigated when using a larger set $k = \sqrt{n}$.
On the other hand, other PRD curves based on more robust classifiers (Parzen, Coverage and kNN) are not affected by the outlier as expected.

\subsection{Impact of the ambient dimension}
\label{sec:exp_dim}
It is customary in the evaluation of generative models to first feed both target and generated data through a deep classifier feature space. In the protocol for computing FID, the authors use InceptionV3 where the feature space has a dimension of 2048, and use 10K samples for both real and generated data. We now test our metrics in the same Gaussian shifting setting, with samples in $\mathbb{R}^{2048}$. The experiment result is shown in Fig.~\ref{fig:gaussian-shift-2048}. While we used $k=\sqrt{n}$, the results show a much larger overestimation than in the $\mathbb{R}^{64}$ setting. This illustrates the well-known curse of dimensionality. 

\begin{figure}
    \centering
    \includegraphics[width=.7\linewidth,trim={0 0 160 0},clip]{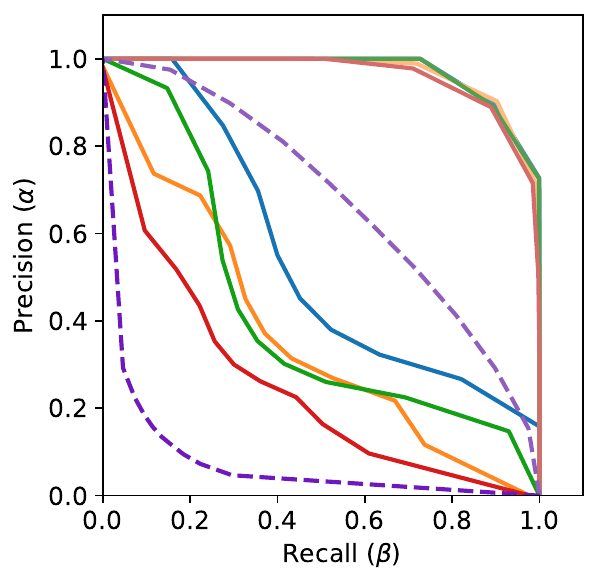}
    \caption{\textbf{PR curves in high dimension} Same experiment as in Fig.~\ref{fig:shift-gauss} (50\% split, $n=10$K, $k=\sqrt{n}$) with $d=2048$ dimensions and $\mu \in \{\frac{1}{\sqrt{d}},\frac{2}{\sqrt{d}}\}$. 
    %Gaussian shift experiment in $\mathbb{R}^{2048}$ with  $P \sim \mathcal{N}(0,\mathbb{I}_{d})$ and $ Q \sim \mathcal{N}(\mu \mathbf{1}_{d},\mathbb{I}_{d})$ with $\mu \in [\frac{1}{\sqrt{d}},\frac{2}{\sqrt{d}}]$}
    %\caption{Variability of the generate 
    }
    \label{fig:gaussian-shift-2048}
\end{figure}

\subsection{Variability study}\label{sec:exp_variability}
Additional experiments in appendix (Fig.~\ref{fig:varibility-nbpoints})
scrutinize the impact of the size sample $n$ on the variability of the evaluation curves.
% bias in FID \cite{chong2020effectively}
They empirically illustrate the consistency of the proposed method based on robust classifiers when increasing (Thm~\ref{thm:optimality}). Empirically, using $10K$ points reduces sufficiently the variability to make comparison between curves reliable, 
which is in line with the standard usage for generative model evaluation \cite{heusel_GANsTrainedTwo_2017,sajjadi_AssessingGenerativeModels_2018}.

\section{Discussion and perspectives} 
\subsection{Distilling the curve with two scalar metrics}
Practitioners may enjoy summarizing the PR curve with two metrics reflecting respectively precision and recall.
In other words, one would be willing to trade exhaustiveness for conciseness.
This may be particularly useful to ease comparison between models. % by detect mode dropping during training, or be used as an optimization criterion for hyper-parameter tuning.  
In that prospect, we have seen that extreme precision and recall are yet far from ideal.
We therefore discuss two alternatives and comment on them in a scenario combining mode dropping/invention/re-weighting (e.g. Fig.~\ref{fig:mode-diffs}).

\paragraph{F-scores} Proposed by \cite{sajjadi_AssessingGenerativeModels_2018}, the $F_b$ score is defined as:
$$F_b = \max_{\lambda \in [0,+\infty]} \frac{1+b^2}{\frac{b^2}{\alpha_\lambda}+\frac 1{\beta_\lambda}}
.$$
When $b\to\infty$, $F_b\nearrow\alpha_\infty$ and respectively when $b\to 0$ $F_b\searrow\beta_0$, so that \cite{sajjadi_AssessingGenerativeModels_2018} proposed to consider $F_b$ and $F_{1/b}$ with a large value of $b$ (namely $b=8$).
Although this aspect was not discussed in their original work, one can understand that these metrics 
%will not be sensitive to thin-tailed distributions. \juju{why not saying "be sensitive to fat-tailed distribution"}
will not be sensitive to rapidly decaying infinite tails.
However, observe that the score $F_b$ is a weighted harmonic mean that is computed from a single (optimal) point of the PR curve, which can vary dramatically without affecting the metric.
% One characterizing trait of this metric is that $F_b$ is in fact affected by a single operating point of the PR-curve, in the sense that the value of $\lambda$ that yields $F_b$ will often remain constant when the curve is modified in a different location. 
As a consequence, when $b$ is large, $F_b$ and $F_{1/b}$ will mainly capture pure mode dropping/invention and remain superficially indicative of mode re-weighting.

\paragraph{PR median} As another alternative, one may consider $(\alpha_{\bar \lambda},\beta_{\bar \lambda})$ where $\bar\lambda$ is set so that the line $\alpha=\bar\lambda\beta$ cuts the region under the PR curve into two sub-parts of equal areas. 
In the case of pure mode dropping and mode inventions, $\alpha_{\bar\lambda}=\alpha_\infty$ and $\beta_{\bar\lambda}=\beta_0$.
On the contrary, when infinite tails create quick transitions to $\alpha_\infty=1$ like in Fig.~\ref{fig:mode-diffs}, then (alike $F_b$) $\alpha_{\bar\lambda}<1$ will be but mildly impacted by rapid-decay tails. 
As opposed to $F_b$, the value of $\alpha_{\bar\lambda}$ will be largely affected by the presence of a transition due to mode re-weighting (violet transition in Fig.~\ref{fig:mode-diffs}).
As a result, these metrics may be preferred when one would like to account for mode re-weighting in addition to mode dropping/invention.

An empirical assessment on the pros and cons of both alternatives would be a valuable endeavor. In particular,
one could study the behavior of the said metrics with respect to hyperparameters of state-of-the-art generative models.
One can for instance consider, the truncation procedure for GANs, or the guidance scale factor for diffusion models.
This empirical study is left as future work.

\subsection{More in-depth convergence analysis}
In Theorem~\ref{thm:optimality}, we have demonstrated that the kNN estimator is universally consistent. 
The proof is adapted from standard Bayes consistency results on kNN classifiers. 
Characterizing the quality of other estimators based on standard classification schemes (e.g. Kernel Discriminant Analysis) could also be considered. Besides, characterizing, rates of convergence under regularity assumptions for $P$ and $Q$ is also an appealing avenue.
This is left as future work, but we refer the reader to \cite{devroye2013probabilistic} and to \cite{gyorfi2021universal} for a recent overviews of useful results.
In a more practical perspective, one may wish to choose optimally the hyper-parameters of the different estimators. 
In our work we have considered the following heuristics $k=\sqrt{n}$ for the kNN estimator and a fixed bandwidth $\rho_{\mathcal{X}}$ computed based on the average kNN radius for the Parzen classifier. 
A large literature does exist around those topics, see for example \cite{ghosh2004optimal} and \cite{doring2018rate} for kNN and \cite{ghosh2006optimum} for optimal bandwidth.
It is also possible to resort to cross-validation for setting these hyper-parameters.

\iffalse
\hl{Discuss the following things:}
\begin{enumerate}
    \item hyperparameter $k$ \hl{possible reference } \cite{ghosh2006optimum} (available \href{https://www.isical.ac.in/~akghosh/CSDA-2006.pdf}{https://www.isical.ac.in/~akghosh/CSDA-2006.pdf})
    \item fixed bandwidth variant of ipr \hl{possible reference }\cite{ghosh2004optimal} (available \href{https://d1wqtxts1xzle7.cloudfront.net/99594169/sinica_paper-libre.pdf?1678297581=&response-content-disposition=inline%3B+filename%3DOptimal_Smoothing_in_Kernel_Discriminant.pdf&Expires=1706366653&Signature=ZDwv3RBD9uDu819JEsyXM1IKauAf8UUiqq4wXy-rbsFOB1Jf8P3GIUb2hGQ~MmTG67MzKiUhSIcpH36~ZouHiDD1EhSbrZniVe2kGfLiapHPfUHnSHAWBZFMe-SwxEK2uas4UACfabyyUCi29~Fdp0HE~dm6T1CXZ64CaVrjqomP57YT6mNHq1mzp4dWwP0v4~8E~rF7OsHOv~l8IHBNQ2fRzN7LENhaqyjoXij0ouCqn2GI-aekhgUJqUevlTxm5xb13xCMkSBCri0dcK9iKH1syOb~R7k2U7FTlfd7MNXkl~PyhfRpyFp7uRWF4HAbJ5EvQk1TWlgHweAWd-dGpQ__&Key-Pair-Id=APKAJLOHF5GGSLRBV4ZA}{HERE})
    \item about outliers, mutual nearest neighbors: \href{https://d1wqtxts1xzle7.cloudfront.net/71347676/gcc.2010.2320211004-10989-akc1nv-libre.pdf?1633396230=&response-content-disposition=inline%3B+filename%3DA_New_Classification_Algorithm_Using_Mut.pdf&Expires=1706369991&Signature=YLE3ZuRMnd3x88p2YAK7ofA3csyyGp5iC5Xe5gqY~EoFOFC0KQS7ExBmSViJ9F9K9PBrPWgam5kCt45oh9DoNJymttyupKUooxw2iAdqvbOopOPZdZ6~8bpThjDvKgOduHWHJQYk2w5ktCn68xQUO-aV-59zgn9IBqh3kPAudD6lXcgtEYJ8Qn2ILkoKXV6NMcfCxtOdTIBIkeWOKuGybaNLlF-2-KZK~bYXkYfpx1R5HGvxCWRsDmBR3Pb-M~cLRN8Va7Ho3sMzPx8dRmHsRopazuMZnLeusO8rCKH4Ojd1gc1m9rX0Pb5p9J490JSJMkg7xaDD6hc14WAamEI9Zg__&Key-Pair-Id=APKAJLOHF5GGSLRBV4ZA}{Here}
\end{enumerate}
\fi

\section{Conclusion}

In this work, we have given a new perspective on recent metrics used to evaluate extreme precision and recall of generative models.
Doing so we have obtained two by-products.
First, we have presented a systematic way to extend the extreme values to obtain complete PR curves.
Second, we built upon standard literature in non-parametric classification to improve the original approaches.
In particular, we have provided a consistency result for the kNN PR-curve variant as well as several practical improvements over the original iPR and coverage metrics.
We have also studied the empirical behavior of the obtained variants in the light of several toy datasets experiments. 

Our main messages are the following. First, computing non extreme PR values is crucial because of essential issues in the extreme values which are related to their sensitivity to the distribution tails.
Then, the curves themselves allow to describe more finely how the masses of the two distributions under comparison differ on their modes. This is useful in practice in order to tackle the case where a model generates data from the target support but with re-weighted masses.
On the experimental side, it emerges that coverage is indeed better suited than iPR. Both approaches can be improved by adapting the number of neighbors $k$ with respect to the number of available samples. 
If employing a data split is theoretically appealing, its empirical impact is less marked since the negative bias resulting from the lack of split can sometimes advantageously compensate the positive bias caused by the restricted hypothesis class. However, this benefit is not consistent over all experiments.

\clearpage

\bibliography{bibliography}
\bibliographystyle{icml2024}

\clearpage
\appendix

\section{Proof of Theorem~\ref{thm:optimality}: consistency}
\label{app:proof}
\begin{proof}
     To establish the proof, we need only show that $R_\lambda(f^{kNN}_{\lambda})\to \alpha_\lambda$ as $k\to\infty$ and $\tfrac{k}{n}\to 0$.
     This will effectively imply both items in the theorem since $\alpha_\lambda$ is the associated Bayes risk \cite{simon_RevisitingPrecisionRecall_2019}. 
     To establish this limit, the first step is to show that for fixed $k$ then $\lim_{n\to\infty} R_\lambda(f^{kNN}_{\gamma})$ is equal to
     \begin{equation}
     \label{eq:rlambda}
         \begin{split}
     2\lambda \E[\eta(Z)\P\{Binom(k,\eta(Z))<\tfrac{k}{\gamma+1}|Z\}] +\\
     2\E[(1-\eta(Z))\P\{Binom(k,\eta(Z))>\tfrac{k}{\gamma+1}|Z\}] 
         \end{split}
     \end{equation}
     where $Z=UX+(1-U)Y$ with $X\sim P$, $Y\sim Q$  and $U$ is a fair coin random variable so that $Z\sim \tfrac{P+Q}{2}$ and $\eta(Z):=\P(U=1|Z) = \tfrac{dP}{d(P+Q)}(Z)$ (respectively $1-\eta(Z):=\P(U=0|Z) = \tfrac{dQ}{d(P+Q)}(Z)$). The demonstration of Eq.~\eqref{eq:rlambda} follows the same argument as in \cite{devroye2013probabilistic}[Thm.~5.2] (up to the occurrence of $\lambda$ and $\gamma$ weights) and is not repeated here for the sake of conciseness.

    Now, taking $\gamma=\lambda$ we want to show that the previous expression tends to $\alpha_\lambda$ which for the recall equals $(\lambda P\wedge Q)(\measSpace)$ or expressed otherwise as $2\E[\lambda \eta(Z)\wedge(1-\eta(Z))]$.

     Eq.~\eqref{eq:rlambda} can be reformulated as $\lim_{n\to\infty} R_\lambda(f^{kNN}_{\lambda}) = 2\E[\mu_\lambda(\eta(Z))]$ with
     \begin{equation}
         \label{eq:mu}
         \begin{split}
         \mu_\lambda(p) =& \lambda p \P\{Binom(k,p) < \tfrac{k}{\lambda+1}\} \\
                 & \hspace{1em} + (1-p) \P\{Binom(k,p>\tfrac{k}{\lambda+1}\}\\
         \end{split}
     \end{equation}
     So that it suffices to show that $\forall p\in[0,1]$, $\mu_\lambda(p)\to\lambda p \wedge(1-p)$.
     
     Let's proceed by cases, starting by considering $\lambda p < (1-p)$ which is also equivalent to $p<\tfrac{1}{\lambda+1}$. In that case we need to show that $2\mu_\lambda(p)\to \lambda p$.
     Denoting $q_\lambda(p)  = \P\{Binom(k,p)>\tfrac{k}{\lambda+1}\}$, we have
     \begin{equation}
     \begin{split}
        \mu_\lambda(p) =&  \lambda p (1-q_\lambda(p)) + (1-p)q_\lambda(p) \\
                =& \lambda p + q_\lambda(p)(1-(\lambda+1)p)
     \end{split}
     \end{equation}
     Using Hoeffding's inequality (i.e. $\forall t>0, \P\{Binom(k,p)-kp> t\}\leq \exp (-2k t^2)$  and obtain
     \begin{equation}
         \begin{split}
     q_\lambda(p) =& \P\{Binom(k,p)-kp> k (\tfrac{1}{\lambda+1}-p)\} \\
     \leq& \exp \left(-2k (\tfrac{1}{\lambda+1}-p)^{2}\right)
         \end{split}
     \end{equation}
     Note that the assumption $p < \tfrac 1{\lambda+1}$ is crucial to apply Hoeffding's inequality (because $t$ needs to be positive).
     The right hand side converges to $0$ as $k\to\infty$ because by assumption $p\neq \tfrac{1}{\lambda+1}$.

     The case where $\lambda p > (1-p)$ (or $p>\tfrac{1}{\lambda+1}$) is similar and is left to the reader. In that case, we obtain $\mu_\lambda(p)\to (1-p)$.
     There remains the case of equality, that is $\lambda p = 1-p = \tfrac{1}{\lambda +1}$. In that case, even without taking the limit, one can check that $\mu_\lambda (p) = \lambda p$, which concludes the proof.
\end{proof}

\section{Additional experimental results}

% TODO
% - calcul des PR medians
% -  cas P=Q avec split / no-split
% - 

\textbf{Computation of PR curves}
The computation PR curves involves the computation of false positive rate (fpr) and false negative rate (fnr) for various classifiers $f_\gamma^M$ parameterized by $\gamma$.
In experiments, we consider random samples $\mathcal X$ and $\mathcal Y $ from distributions $P$ and $Q$ with $n=|\mathcal X | = |\mathcal Y |$.
$$
    \widehat \fnr (f_\gamma) = \frac1n % \int f dQ
    \sum_{y \in \mathcal Y} {f_\gamma(y)} 
$$
and
$$
    \widehat \fpr(f_\gamma) = % \int 1-f dP
     \frac1n \sum_{x \in \mathcal X} {1 - f_\gamma(x)} 
$$
Precision $\hat \alpha_\lambda$, and recall $\hat \beta_\lambda = \tfrac{1}{\lambda}\alpha_\lambda$ are computed using 
Eq.~\eqref{eq:hatPRC},
where parameters $\lambda = \text{atan}(\theta)$ and $\gamma$ are both uniformly sampled with $\theta \in (0, \tfrac{\pi}{2})$.

\begin{figure}[!htb]
    \centering
    \begin{tabular}{cc}
        %\multicolumn{2}{c}{\emph{50\% split validation/train}} 
        %&
        \multicolumn{2}{c}{\emph{without split}}  
        \\
        \includegraphics[height=\myheight,trim={0 0 0 0},clip]{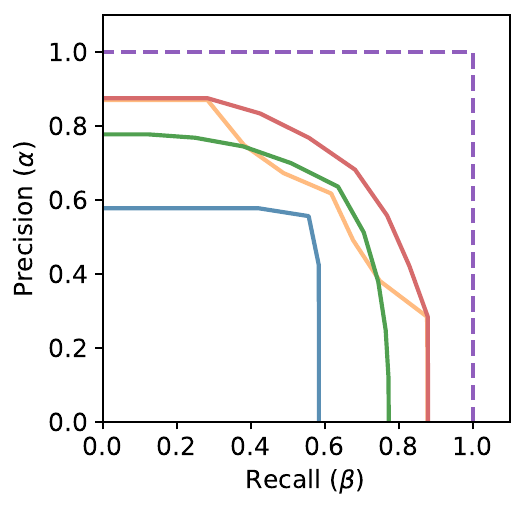}
        &
        \includegraphics[height=\myheight,trim={0 0 0 0},clip]{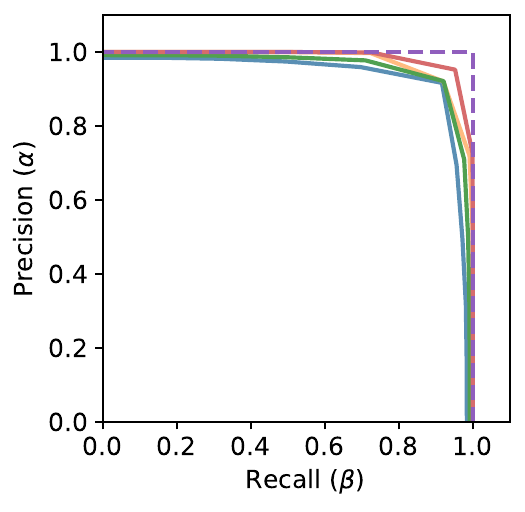}
         \\
        $k=4$ & $k=\sqrt n$
        \\
        %&
        \multicolumn{2}{c}{\emph{with split}}  
        \\
        \includegraphics[height=\myheight,trim={0 0 0 0},clip]{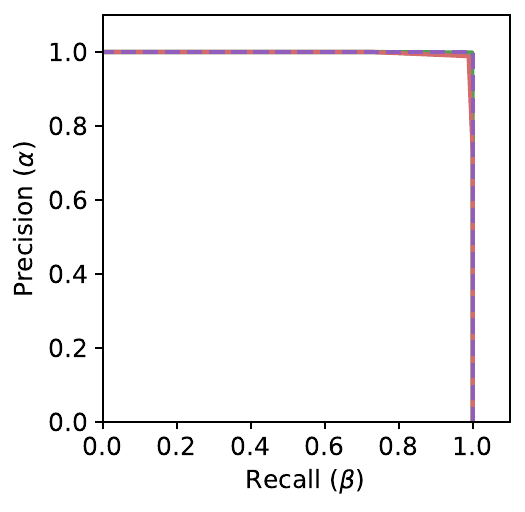}
        &
        \includegraphics[height=\myheight,trim={0 0 0 0},clip]{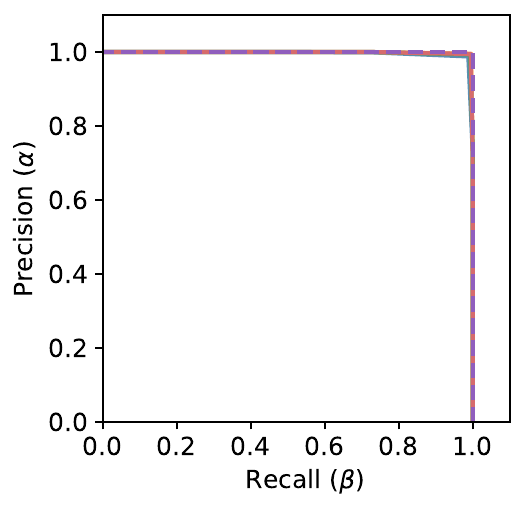}
         \\
        $k=4$ & $k=\sqrt n$
    \end{tabular}
    
    \caption{\textbf{Illustration of the impact of splitting for $P=Q$.} 
    The setting is the same as Fig.~\ref{fig:shift-gauss}
    for a translation of $\mu=0$ between two Gaussian in dimension $d=64$ (curves are averaged over 3 random samples).
    The Ground-Truth PR curve (
    {\color{color_GT} \bfseries -~-\textsc{GT}})
    is compared to empirical estimates from various NN-classifiers:
    {\color{color_IPR} \bfseries --\textsc{iPR}},
    {\color{color_KNN} \bfseries --\textsc{knn}},
    {\color{color_PARZ} \bfseries --\textsc{Parzen}}, and 
    {\color{color_COV} \bfseries --\textsc{Coverage}}.
    Top reports results without splitting :
    as reported in the literature, estimated extremal precision and recall values are not equal to 1, contrary to the ground-truth.
    %Top is with split
    Bottom curves, obtained with a 50\% splits, are very close to the ideal curve.
    }
    \label{fig:P=Q}
\end{figure}

\textbf{Splitting}
All experiments involving data splitting into training and validation sets ($\mathcal X^T \& \mathcal Y^T$ and $\mathcal X^V \& \mathcal Y^V$) are 50\%: curves computed in this setting therefore rely on $\frac n2$ data points.

Figure \ref{fig:P=Q} provides a visual illustration of the practical impact of splitting the dataset into separate training and validation sets to assess precision-recall curves. 
We consider same experimental setting as in section \ref{sec:exp_shift} and Figure \ref{fig:shift-gauss} with $\mu=0$, in such a way that $P=Q$.
In this specific case, the precision is equal to 1 for all recall.

\textbf{Gaussians shifts}
Table.~\ref{tab:mean_iou_shift_k4} complements section \ref{sec:exp_shift} and Figure \ref{fig:shift-gauss} by comparing various estimated PR curves with respect to the ground-truth, using average IoU scores.

\begin{table}[htb]
\small
    \centering
    \begin{tabular}{c c c||c|c|c|c}
&&shift $\mu$ %&ipr & knn & parzen & coverage \\
   & {\color{color_IPR}  \textsc{iPR}}
   & {\color{color_KNN}  \textsc{knn}}
   & {\color{color_PARZ}  \textsc{Parzen}} 
   & {\color{color_COV}  \textsc{Coverage}}
\\
\hline\hline
\multirow{8}{*}{\rotatebox[origin=c]{90}{with 50 \% split}}&\multirow{4}{*}{\rotatebox[origin=c]{90}{$k=4$}}
&$0.12$  &  0.69  & 0.71  & 0.72  & 0.73  \\
&&0.21  &  0.42  & 0.49  & 0.49  & 0.55  \\
&&0.29  &  0.24  & 0.38  & 0.34  & 0.48  \\
&&0.38  &  0.13  & 0.33  & 0.24  & 0.48  \\
\cline{2-7}
&\multirow{4}{*}{\rotatebox[origin=c]{90}{$k=\sqrt n$}}
&0.12  &  0.81 & 0.87 & 0.84 & 0.92 \\
&&0.21  &  0.69 & 0.84 & 0.78 & 0.90 \\
&&0.29  &  0.65 & 0.84 & 0.75 & 0.90 \\
&&0.38  &  0.63 & 0.84 & 0.75 & 0.93 \\
\cline{1-7}
\multirow{8}{*}{\rotatebox[origin=c]{90}{without split}}&\multirow{4}{*}{\rotatebox[origin=c]{90}{$k=4$}}
&0.12  &  0.43 & 0.7 & 0.62 & 0.76 \\
&&0.21  &  0.55 & 0.81 & 0.68 & 0.84 \\
&&0.29  &  0.62 & 0.79 & 0.68 & 0.77 \\
&&0.38  &  0.55 & 0.61 & 0.62 & 0.63 \\
\cline{2-7}
&\multirow{4}{*}{\rotatebox[origin=c]{90}{$k=\sqrt n$}}
&0.12  &  0.91 & 0.93 & 0.94 & 0.96 \\
&&0.21  &  0.88 & 0.93 & 0.92 & 0.97 \\
&&0.29  &  0.84 & 0.92 & 0.90 & 0.95 \\
&&0.38  &  0.83 & 0.91 & 0.90 & 0.96
\end{tabular}
    \caption{\textbf{Mean IoU scores} for shifting Gaussians.
    Standard deviations are $< 10^{-2}$ with $n=10$K.}
    \label{tab:mean_iou_shift_k4}
\end{table}

\textbf{Gaussian Mixture comparison}
Fig.~\ref{fig:GMM-dim64_suite} complements section \ref{sec:exp_GMM} and Figure \ref{fig:GMM-dim64} with additional curves for different setting (w/ and w/o splitting, $k=4$ or $k=100$).

\begin{figure*}[htb]
    \centering
    %[trim={left bottom right top},clip]
    \begin{tabular}{cccc}
        \multicolumn{2}{c}{\emph{50\% split validation/train}} 
        &
        \multicolumn{2}{c}{\emph{without split}}  
        \\
        \includegraphics[height=\myheight,trim={0 0 130 0},clip]{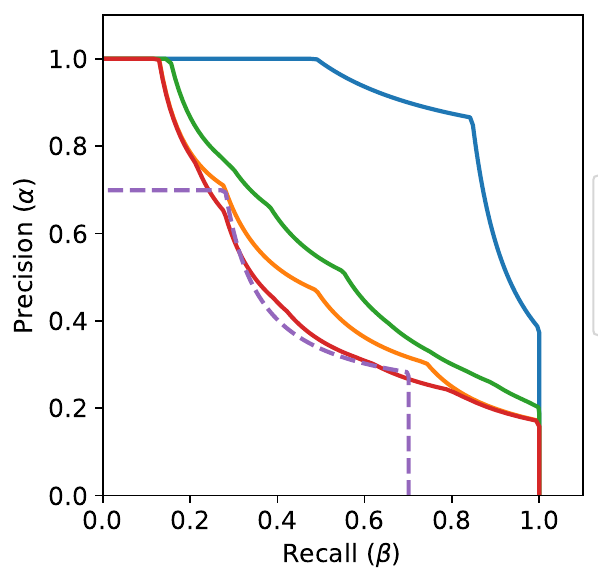}
        &
        \includegraphics[height=\myheight,trim={20 0 130 0},clip]{figures/GMM_dim64_k-sqrtn_split.pdf}
        &
        \includegraphics[height=\myheight,trim={20 0 130 0},clip]{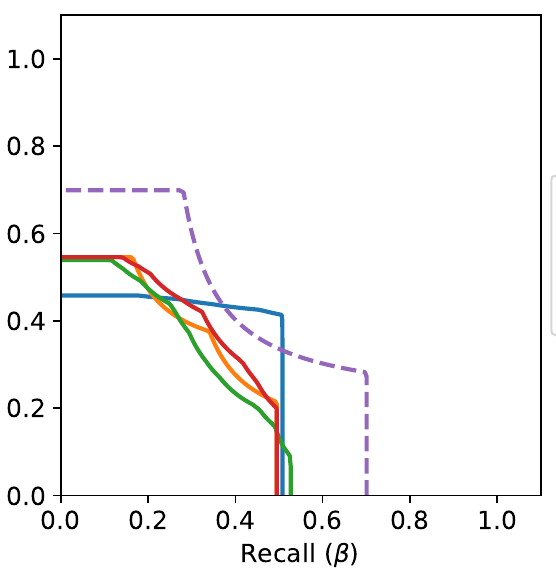}
        &
        \includegraphics[height=\myheight,trim={20 0 130 0},clip]{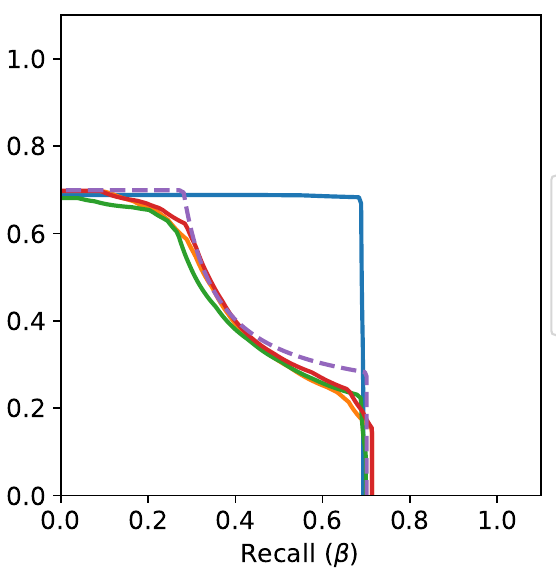}
         \\
        $k=4$ & $k = \sqrt n$ & 
        $k=4$ & $k = \sqrt n$ 
    \end{tabular}
    \caption{\textbf{Comparing two Gaussian mixtures}. This figure complements Fig.~\ref{fig:GMM-dim64}.
    The Ground-Truth PR curve (
    {\color{color_GT} \bfseries -~-\textsc{GT}})
    is compared to empirical estimates from various NN-classifiers:
    {\color{color_IPR} \bfseries --\textsc{iPR}},
    {\color{color_KNN} \bfseries --\textsc{knn}},
    {\color{color_PARZ} \bfseries --\textsc{Parzen}}, and 
    {\color{color_COV} \bfseries --\textsc{Coverage}}.
    Here $P$ and $Q$ are two GMMs sharing the same modes (centered at $\mu_k$):
    $P \sim \sum_{\ell} p_\ell \mathcal{N}(\mu_\ell \mathbf 1_{d}, \mathbb{I}_{d})$ 
    and 
    $ Q \sim \sim \sum_{\ell} q_\ell \mathcal{N}(\mu_k \mathbf 1_{d}, \mathbb{I}_{d})$ with $d=64$ dimensions and 
    $\mu_\ell \in \{0, -5, 3, 5\}$.
    However, $P$ and $Q$ have different weights ($p_\ell$ and $q_\ell$)
    $p_\ell  \in \{0.3, 0.2, 0.5, 0\}$
    $q_\ell  \in \{0, 0.5, 0.2, 0.3\}$.
    $n=1$k points are sampled and split in half between validation and train, and $k=\sqrt n$. 
    %
    %Same setting than Fig.~\ref{fig:shift-gauss}, without averaging and only using split.
    }
    \label{fig:GMM-dim64_suite}
\end{figure*}

\textbf{Outlier}
Fig.~\ref{fig:outlier-gauss} complements section \ref{sec:exp_outlier}.
It shows the impact of having a single outlier (a data-point out of $n=10$K) in one of the samples. 
Here only the sample $\mathcal X$ from $P$ is polluted, thus mainly affecting precision.

\begin{figure}[htb]
    \centering
    \begin{tabular}{cc}
        %\multicolumn{2}{c}{\emph{50\% split validation/train}} 
        %&
        \multicolumn{2}{c}{$k = 4$ \emph{without split}}  
        \\
        \includegraphics[height=\myheight,trim={20 0 135 0},clip]{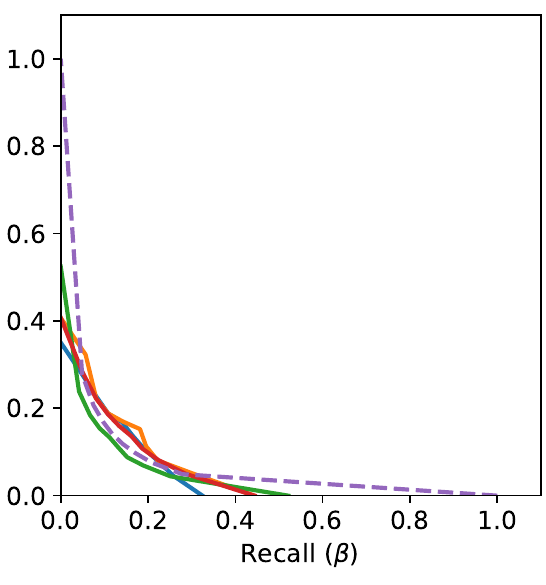}
        &
        \includegraphics[height=\myheight,trim={0 0 135 0},clip]{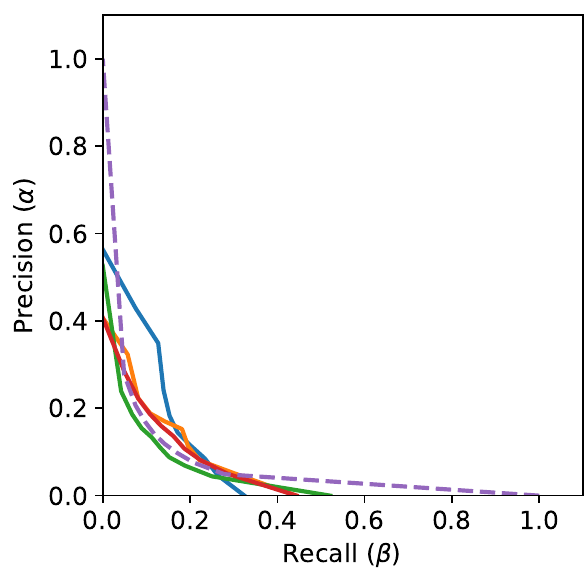}
         \\
        w/o outlier & w/ outlier
        \\
        %&
        \multicolumn{2}{c}{$k = \sqrt n$ \emph{without split}}  
        \\
        \includegraphics[height=\myheight,trim={20 0 135 0},clip]{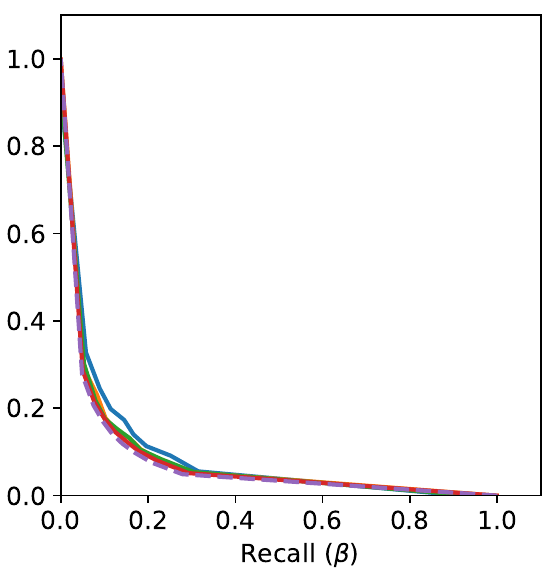}
        &
        \includegraphics[height=\myheight,trim={0 0 135 0},clip]{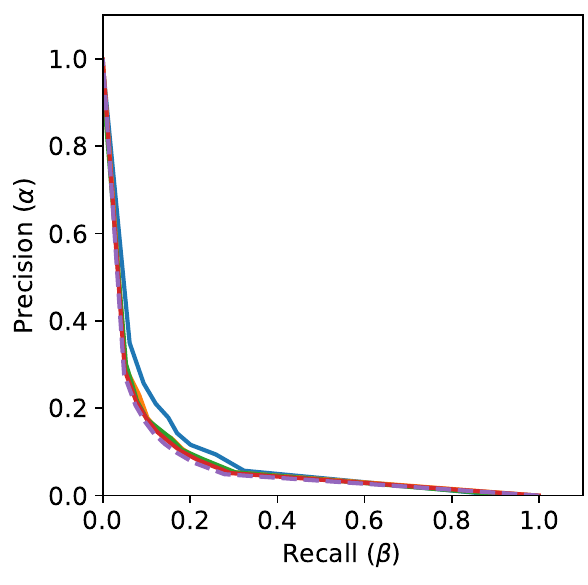}
         \\
        w/o outlier & w/ outlier
    \end{tabular}
    
    \caption{\textbf{Measuring the impact of an outlier.} 
    On the left, the setting is the same as Fig.~\ref{fig:shift-gauss}
    for a translation of $\mu=3/\sqrt d$ between two Gaussian in dimension $d=64$ (without splitting nor averaging).
    %and with $k=\sqrt n$.
    On the right, a single outlier ($x =  \mathbf{4}$) is added to the sample of $P$. 
    As reported in the literature, this affects the iPR classifier, yet the PR curves are barely affected by such a perturbation.
    }
    \label{fig:outlier-gauss}
\end{figure}

\textbf{Variability}
\iffalse
In practice, it is often time-consuming to generate a large number of synthetic samples.
This paragraph investigates the effect of varying the number of samples $n$.
In our previous examples, we supposed we had access to a large number of points to generate the precision-recall curves. We now experiment how sensitive our methods are to the number of data points. To do so, we use $n=[10,100,1000,10000]$ data points for distribution $P$ and the same for $Q$, and we use splitting with a factor $0.5$, meaning half of the data points are used to fit the manifold estimations and the other half is used to evaluate the metric. Fig.~\ref{fig:varibility-nbpoints} in appendix shows the experiments, with 100 runs per $N$. The mean value of the PRD curve is represented in solid color and variance in light color. As expected, when the number of points is low, the variance of the resulting curves is large. When using 1000 points, the curve is quite robust.
\fi
%
Fig.~\ref{fig:varibility-nbpoints} complements Section \ref{sec:exp_variability} about variability.
Average curves are obtained by computing the empirical mean of $N = 100$ PR curves obtained different random $n$-samples (with $n=10^4$), \emph{i.e.} 
 $$
    (\bar \alpha(\lambda) , \bar \beta(\lambda)) 
    = \frac{1}{n}\sum_{i=1}^n  (\alpha_i(\lambda) , \beta_i(\lambda)) 
 $$
Deviation from average curves are materialized with two curves
$$
 (\alpha_{\delta}(\lambda) , \beta_{\delta}(\lambda))
 =
    (\bar \alpha(\lambda), \bar \beta(\lambda)) + \delta(\lambda)  
$$
for $\delta = \pm \sigma$ with empirical estimator
$$
    \sigma^2 (\lambda) 
    = \frac{1}{n}\sum_{i=1}^n  \left( (\alpha_i(\lambda)-\bar \alpha(\lambda))^2 , (\beta_i(\lambda)- \bar \beta(\lambda))^2 \right).
$$

\setlength\myheight{0.21\textwidth} 
\begin{figure}[htb]
    \centering
    %[trim={left bottom right top},clip]
    \begin{tabular}{cc}
        \includegraphics[height=\myheight]{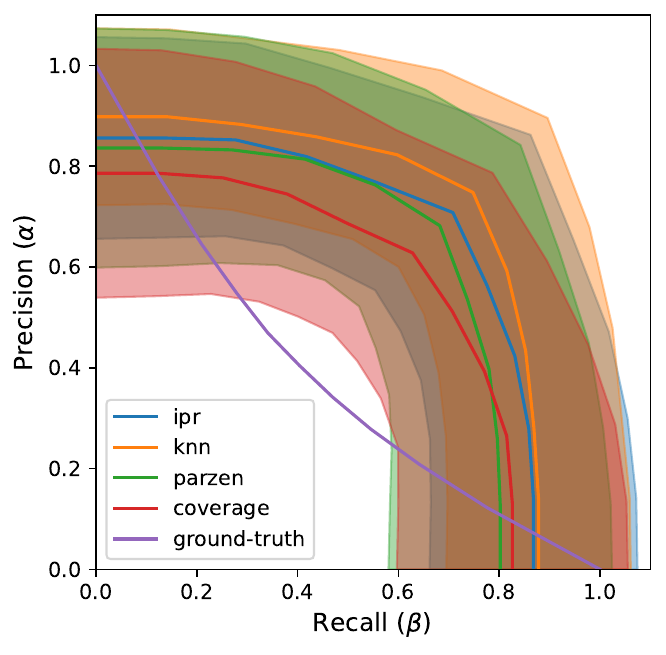}
        &
        \includegraphics[height=\myheight]{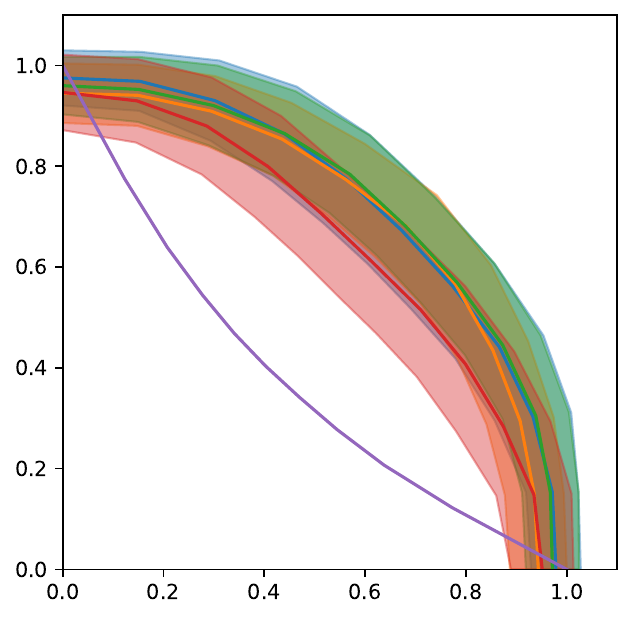}
        \\
        $n=10$ & $n=100$ \\
        \includegraphics[height=\myheight]{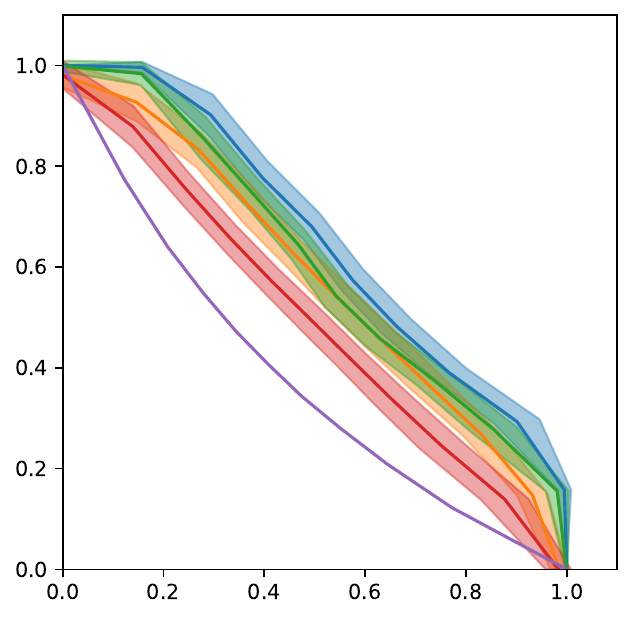}
        &
        \includegraphics[height=\myheight]{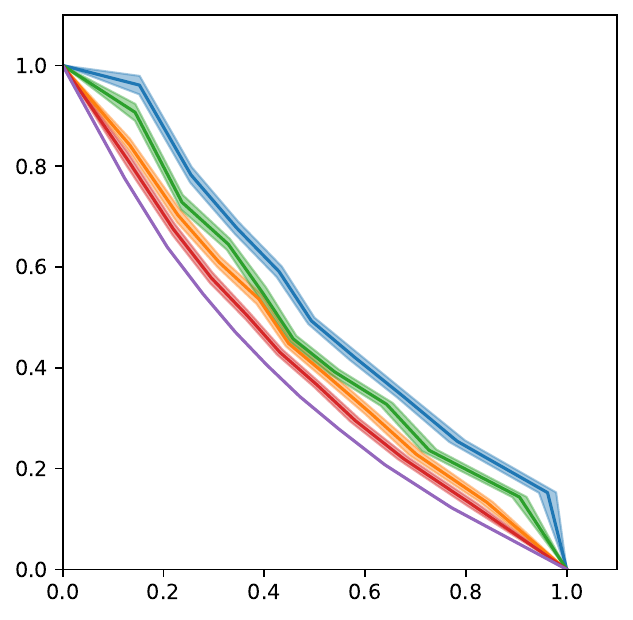}
         \\
        $n=1,000$ & $n=10,000$ 
    \end{tabular}
    \caption{\textbf{Influence of sample size $n$.}
    The setting is the same as Fig.~\ref{fig:shift-gauss}
    for a translation of $\mu=.21$ between two Gaussian in dimension $d=64$ (with splitting and $k=\sqrt n$).
    Solid (respectively transparent) curves correspond to the empirical average (resp. deviations) of $100$ PR curves computed from random samples. % from $P$ and $Q$.
    (see the text for more details).
    }
    \label{fig:varibility-nbpoints}
\end{figure}

% Julien : ajout après rebuttal
% \input{content/10_appendix}

\end{document}